\newenvironment{keywords}
{\bgroup\leftskip 25pt\rightskip 25pt \small\noindent{\bf Keywords:} }
{\par\egroup\vskip 0.25ex}
\theoremstyle{remark}
\newtheorem{mytheorem}{Theorem}
\newtheorem{mylemma}{Lemma}
\newtheorem{mydefinition}{Definition}
\newcommand{\RR}{\mathbb{R}}
\newcommand{\V}{\ve{V}}
\newcommand{\W}{\ve{W}}
\renewcommand{\H}{\ve{H}}
\newcommand{\hh}{\ve{h}}
\newcommand{\vv}{\ve{v}}
\newcommand{\tih}{\tilde{\ve{h}}}
\newcommand{\ve}[1]{ {\mathbf{#1}} }
\newcommand{\defequal}{\stackrel{\mbox{\footnotesize def}}{=}}
\newcommand{\bal}{\bal}
\newcommand{\eal}{\end{eqnarray}}
\newcommand{\baln}{\begin{eqnarray*}}
\newcommand{\ealn}{\end{eqnarray*}}
\newcommand{\conv}[1]{\smash{\overset{\scriptscriptstyle\smile}{#1}}}
\newcommand{\conc}[1]{\smash{\overset{\scriptscriptstyle\frown}{#1}}}
\def\bal#1\eal{\begin{align}#1\end{align}}
\def\balx#1\ealx{\begin{align*}#1\end{align*}}
\def\argmin{\mathop{\mathrm{arg\,min}}}
\def\cst{\textit{cst}}
\begin{document}
\title{Algorithms for nonnegative matrix factorization with the $\beta$-divergence}

\author{C\'edric F\'evotte\textsuperscript{1}  and J\'er\^ome Idier\textsuperscript{2}
\medskip\\
\textsuperscript{1} CNRS LTCI; T\'el\'ecom ParisTech, France \\
{\small Email: fevotte@telecom-paristech.fr} \\
\textsuperscript{2} CNRS IRCCyN; \'Ecole Centrale de Nantes, France \\
{\small Email: jerome.idier@irccyn.ec-nantes.fr}
}

\date{March 7, 2011\\ 
\bigskip
to appear in \emph{Neural Computation}}

\maketitle
\begin{abstract}
This paper describes algorithms for nonnegative matrix factorization (NMF) with the $\beta$-divergence ($\beta$-NMF). The $\beta$-divergence is a family of cost functions parametrized by a single shape parameter $\beta$ that takes the Euclidean distance, the Kullback-Leibler divergence and the Itakura-Saito divergence as special cases ($\beta = 2,1,0$ respectively). The proposed algorithms are based on a surrogate \emph{auxiliary function} (a local majorization of the criterion function). We first describe a \emph{majorization-minimization} (MM) algorithm that leads to multiplicative updates, which differ from standard heuristic multiplicative updates by a $\beta$-dependent power exponent. The monotonicity of the heuristic algorithm can however be proven for $\beta \in (0,1)$ using the proposed auxiliary function. Then we introduce the concept of \emph{majorization-equalization} (ME) algorithm which produces updates that move along constant level sets of the auxiliary function and lead to larger steps than MM. Simulations on synthetic and real data illustrate the faster convergence of the ME approach. The paper also describes how the proposed algorithms can be adapted to two common variants of NMF: penalized NMF (i.e., when a penalty function of the factors is added to the criterion function) and convex-NMF (when the dictionary is assumed to belong to a known subspace).
\end{abstract}

\begin{keywords} Nonnegative matrix factorization (NMF), $\beta$-divergence, multiplicative algorithms, majorization-minimization (MM), majorization-equalization (ME).
\end{keywords}

\section{Introduction} \label{sec:intro}

Given a data matrix $\ve{V}$ of dimensions ${F \times N}$ with nonnegative entries, NMF is the problem of finding a factorization
\begin{equation} \label{eqn:facto}
\V \approx \W \H
\end{equation}
where $\W$ and $\H$ are nonnegative matrices of dimensions $F \times K$ and $K \times N$, respectively. $K$ is usually chosen such that $F\,K + K\,N \ll F\,N$, hence reducing the data dimension. The factorization is in general only approximate, so that the terms ``approximate nonnegative matrix factorization'' or ``nonnegative matrix approximation" also appear in the literature. NMF has been used for various problems in diverse fields. To cite a few, let us mention the problems of learning parts of faces and semantic features of text \citep{lee99}, polyphonic music transcription \citep{sma03}, object characterization by reflectance spectra analysis \citep{ber07}, portfolio diversification \citep{dra07}, DNA gene expression analysis \citep{bru04,gao05}, clustering of protein interactions \citep{gre08}, image denoising and inpainting \citep{mair10}, etc. The factorization~\eqref{eqn:facto} is usually sought after through the minimization problem 
\begin{equation} \label{eqn:mini}
\underset{\W,\H}{\text{min}} \ D(\V | \W \H) \ \text{subject to} \ \W \ge 0, \H \ge 0
\end{equation}
where the notation $ \ve{A} \ge 0$ expresses nonnegativity of the entries of matrix $\ve{A}$ (and not semidefinite positiveness), and where $D(\V | \W \H)$ is a separable measure of fit such that
\begin{equation} \label{eqn:defcost}
D(\V | \W \H) = \sum_{f=1}^F \sum_{n=1}^N d( [\V]_{fn} | [\W \H]_{fn})
\end{equation}
where $d(x | y)$ is a scalar cost function. What we intend by ``cost function'' is a positive function of $y \in \mathbb{R}_+$ given $x \in \mathbb{R}_+$, with a single minimum for $x = y$.\\

A popular cost function in NMF is the $\beta$-divergence $d_\beta(x|y)$ of \citet{basu98,egu01,cic10}, defined rigorously in Section~\ref{sec:defbet}. In essence, it is a parameterized cost function with a single parameter $\beta$, which takes the Euclidean distance, the generalized Kullback-Leibler (KL) divergence and the Itakura-Saito (IS) divergence as special cases ($\beta = 2,1$ and $0$, respectively). NMF with the $\beta$-divergence has been widely used in music signal processing in particular, for transcription and source separation \citep{ogra07,ograd08,fitz09,icassp09b,neco09,vinc10,Dessein2010,henn10}. In these work the nonnegative data matrix $\V$ is a spectrogram which is decomposed into elementary spectra with NMF. The parameter $\beta$ can be tuned so as to optimize transcription or separation accuracy on training data. While popular in music signal processing, NMF with the $\beta$-divergence (shortened as ``$\beta$-NMF'' in the rest of the paper) can be of interest to any field:  the parameter $\beta$ essentially controls the assumed statistics of the observation noise and can either be fixed or learnt from training data or by cross-validation. As noted by \cite{eusipco09}, the values $\beta = 2,1,0$ respectively underly Gaussian additive, Poisson and multiplicative Gamma observation noise. The $\beta$-divergence offers  a continuum of noise statistics that interpolate between these three specific cases, see \citep{basu98,egu01,mina02, cic10}.\\

The standard $\beta$-NMF algorithm used in the above-mentioned papers is presented as a gradient-descent algorithm where the step size is set adaptively and chosen such that the updates are multiplicative, as originally described by \cite{cic06a}. The same algorithm can be derived from the following heuristic, proposed by \cite{neco09}. Let $\theta$ be a coefficient of $\W$ or $\H$. As will be seen later, when using the $\beta$-divergence the derivative $\nabla_\theta D(\theta)$ of the criterion $D(\V | \W \H)$ with respect to (w.r.t) $\theta$ can be expressed as the difference of two nonnegative functions such that $\nabla_\theta D(\theta) =  \nabla_\theta^+ D(\theta) - \nabla_\theta^- D(\theta)$. Then, a heuristic multiplicative algorithm simply writes
\bal \label{eqn:heur}
\theta \leftarrow \theta .\frac{  \nabla_\theta^- D(\theta)}{  \nabla_\theta^+ D(\theta)}
\eal
which ensures nonnegativity of the parameter updates, provided initialization with a nonnegative value. It produces a descent algorithm in the sense that $\theta$ is updated towards left (resp. right) when the gradient is positive (resp. negative). A fixed point $\theta^\star$ of the algorithm implies either $\nabla_\theta D(\theta^\star) = 0$ or $\theta^\star =0$. Monotonicity of this algorithm has been proven by \cite{kom07} for the specific range of values of $\beta$ for which the divergence $d_\beta(x|y)$ is convex w.r.t $y$ (i.e., $\beta \in [1,2]$, see Section~\ref{sec:defbet}). The proof is based on a \emph{majorization-minimization} (MM) procedure: an \emph{auxiliary function} is built and iteratively minimized for each column of $\H$ (given $\W$) and each row of $\W$ (given $\H$). The auxiliary function is built using Jensen's inequality, thanks to convexity of the cost for $\beta \in [1,2]$. However, it was observed in practice that the multiplicative algorithm~\eqref{eqn:heur} is still monotone (i.e., decreases the criterion function at each iteration) for values of $\beta$ out of the ``convexity'' interval $[1,2]$, though no proof is to avail. \\

This paper studies three descent algorithms for $\beta$-NMF, based on an auxiliary function that unifies existing auxiliary functions for the Euclidean distance and KL divergence \citep{depi93,lee01}, the ``generalized divergence" of \cite{kom07} and the IS divergence \citep{cao99}. This auxiliary function was also recently proposed independently by \cite{naka10}. The construction of the auxiliary function relies on the decomposition of the criterion function into its convex and concave parts, following the approach of \cite{cao99} for the IS divergence. An auxiliary function to the convex part is constructed using Jensen's inequality while the concave part is locally majorized by its tangent. It is shown that MM algorithms based on the latter auxiliary function yield multiplicative updates that coincide with the heuristic described by Eq.~\eqref{eqn:heur} for $\beta \in [1,2]$, but differ from a $\beta$-dependent power exponent when $\beta \not\in [1,2]$, a result also obtained by \cite{naka10}. Additionally, we show that the monotonicity of the heuristic algorithm can however be proven for $\beta \in (0,1)$, using the proposed auxiliary function (it is shown to produce a descent algorithm though it does not fully minimize the auxiliary function). Then we introduce the concept of \emph{maximization-equalization} (ME) algorithm which produces updates that move along constant level sets of the auxiliary function and leads to larger steps than MM. This is akin to \emph{overrelaxation} and is shown experimentally to produce faster convergence. Finally we show how the described MM, ME and heuristic algorithms can be adapted to two common variants of NMF: penalized NMF (i.e., when a penalty function of $\W$ or $\H$ is added to the criterion function) and ``convex''-NMF (when the dictionary is assumed to belong to a known subspace, as proposed by \cite{ding10}).\\

The paper is organized as follows. Section~\ref{sec:prelim} defines and discusses the $\beta$-divergence, and then exposes in details the optimization task addressed in this paper. Section~\ref{sec:betaauxfun} recalls the concept of auxiliary function and then introduces a general auxiliary function for the $\beta$-NMF problem. Section~\ref{sec:algos} describes algorithms based on the proposed auxiliary function, namely MM and ME algorithms, and describe how they relate to the heuristic update~\eqref{eqn:heur}. Section~\ref{sec:simus} reports simulations and convergence behaviors on synthetic and real data (with audio transcription and face interpolation examples). Section~\ref{sec:variants} describes extensions of the proposed algorithms to penalized and convex- NMF. Section~\ref{sec:conc} concludes and discusses open questions.

\section{Preliminaries} \label{sec:prelim}

In this section we present the $\beta$-divergence and more precisely specify the task that is addressed in this paper. A detailed exposition of the $\beta$-divergence can be found in \citep{cic10}.

\subsection{Definition of the $\beta$-divergence} \label{sec:defbet}

The $\beta$-divergence was introduced by \citet{basu98} and \citet{egu01} and can be defined as
\begin{equation} \label{eqn:beta}
d_{\beta}(x | y) \defequal
\left\lbrace
\begin{array}{cc}
\frac{1}{\beta\,(\beta-1)} \left( x^{\beta} + (\beta-1)\,y^{\beta} - \beta \, x\, y^{\beta-1} \right) &  \beta \in \mathbb{R} \backslash \{0,1\}  \\
x\, \log \frac{x}{y} - x + y & \beta = 1 \\
\frac{x}{y} - \log \frac{x}{y} - 1 & \beta = 0
\end{array}
\right.
\end{equation}
\citet{basu98} and \citet{egu01} assume $\beta > 1$, but the definition domain can be extended to $\beta  \in \mathbb{R}$, as suggested by \cite{cic06a}, which is the definition domain that is considered in this paper. The $\beta$-divergence can be shown continuous in $\beta$ by using the identity $\lim_{\beta \rightarrow 0} {(x^\beta-y^\beta)}/{\beta} = \log (x/y)$. The limit cases $\beta = 0$ and $\beta = 1$ correspond to the IS and KL divergences, respectively. The $\beta$-divergence coincides up to a factor $1 / \beta$ with the ``generalized divergence'' of \cite{kom07} which, in the context of NMF as well, was separately constructed so as to interpolate between the KL divergence ($\beta = 1$) and the Euclidean distance ($\beta = 2$). The $\beta$-divergence is plotted for various values of $\beta$ on Figure~\ref{fig:betadiv}. Note that in this paper we will abusively refer to $d_{\beta =2} = (x-y)^2 /2$ as the Euclidean distance, though the latter is formally defined with a square root, and for vectors.

\begin{figure}[t]
\begin{center}
\begin{tabular}{cc}
(a) $\beta<0$ & (b) $0\leq\beta<1$ \\
\epsfig{figure=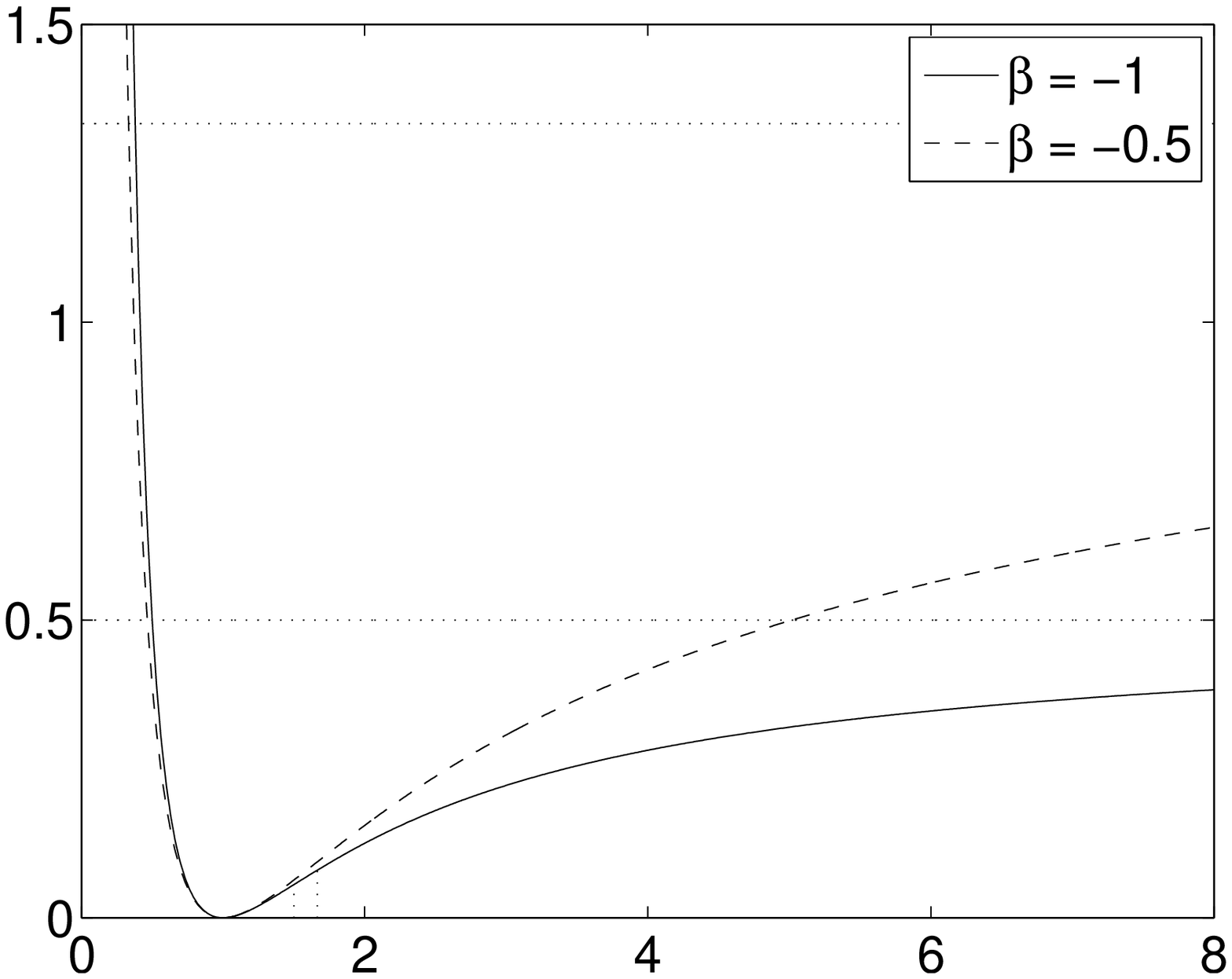,width=0.34\linewidth} & \epsfig{figure=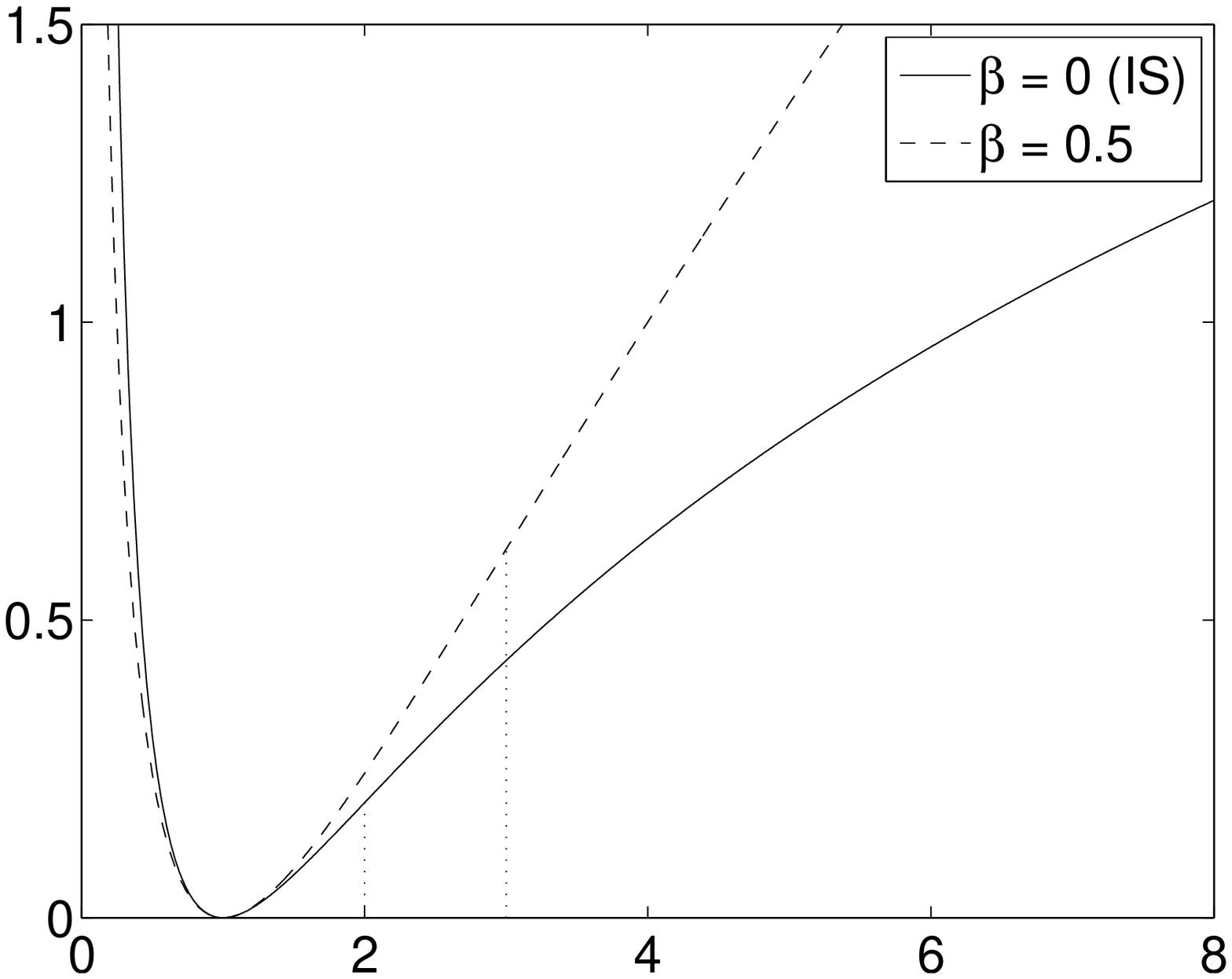,width=0.34\linewidth} \\
(c) $\beta=1$ & (d) $1 < \beta\leq2$ \\
\epsfig{figure=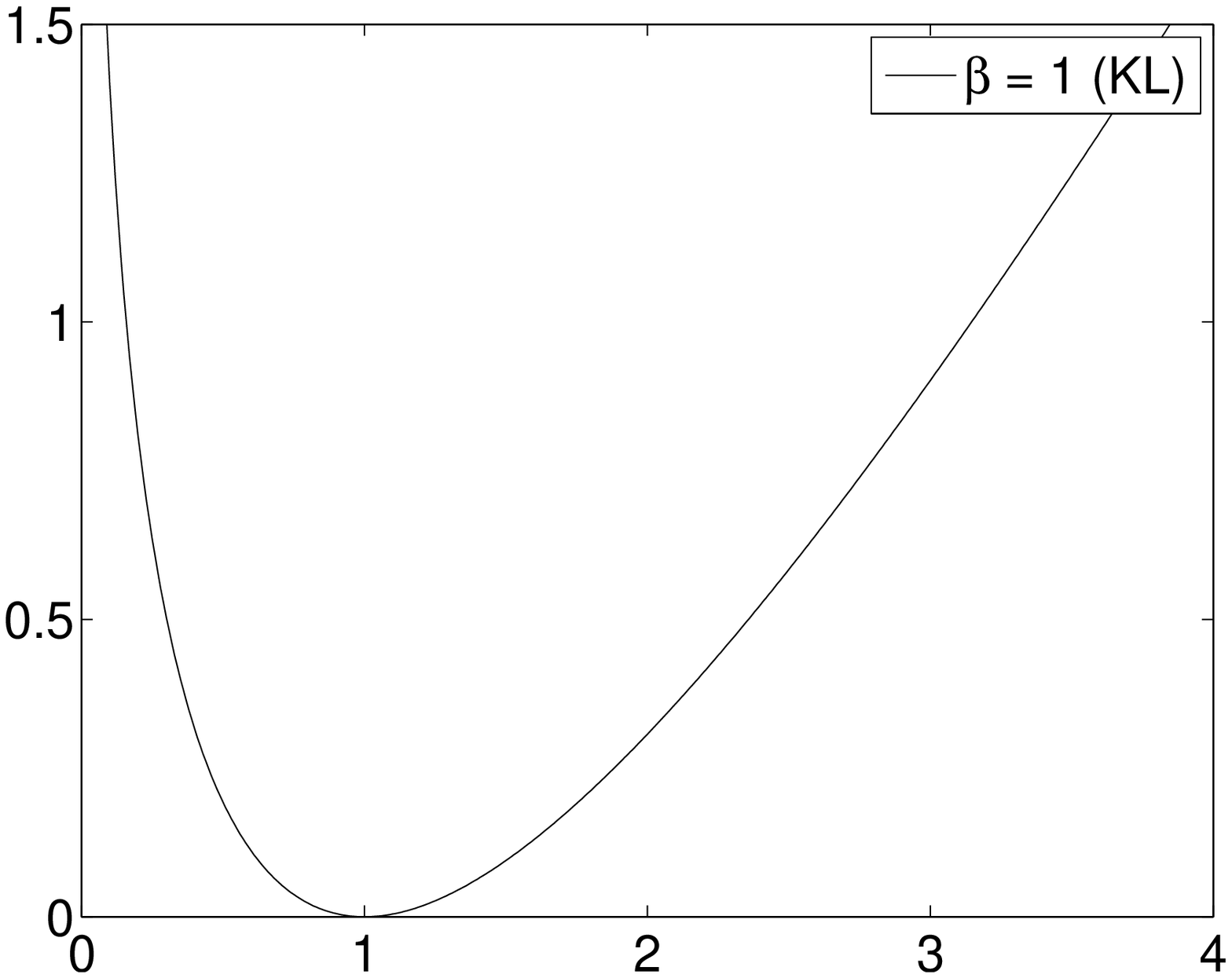,width=0.34\linewidth} & \epsfig{figure=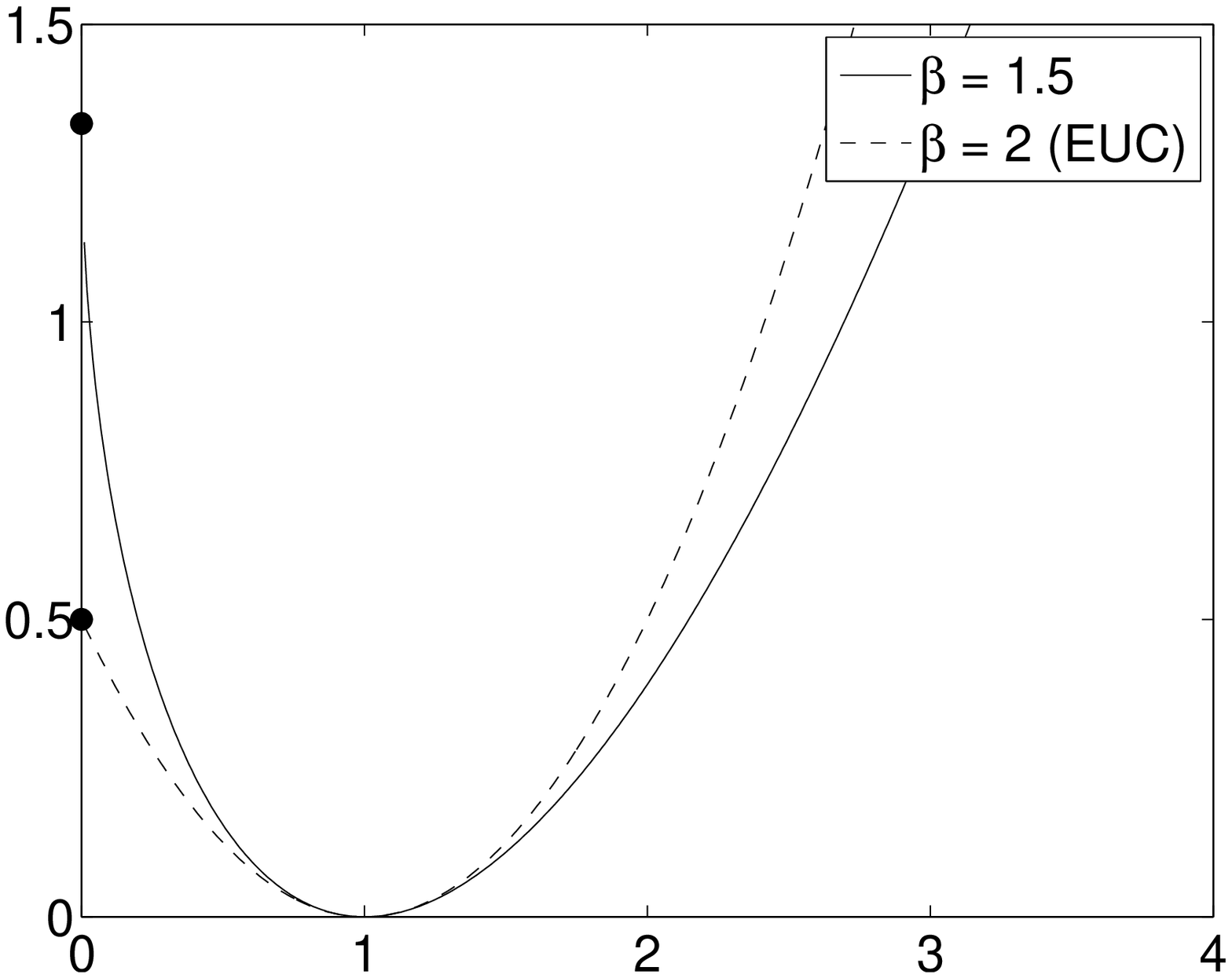,width=0.34\linewidth} \\
(e) $\beta>2$ & \\
\epsfig{figure=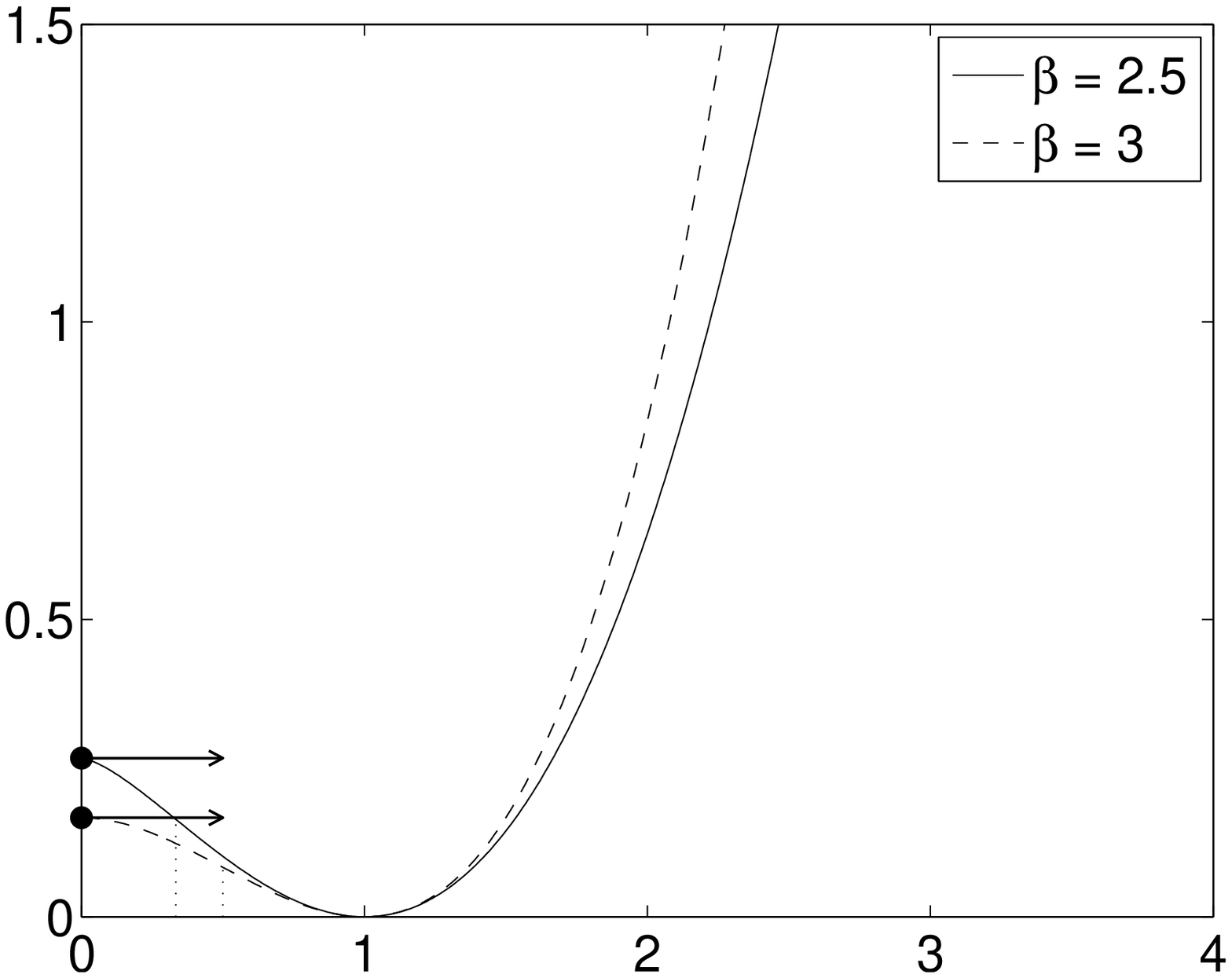,width=0.34\linewidth} & 
\end{tabular}
\end{center}
\caption{$\beta$-divergence $d_{\beta}(x | y)$ as a function of $y$ (with $x=1$). The subfigures illustrate the regimes of the $\beta$-divergence for its five characteristic ranges of values of $\beta$. The divergence is convex for $1 \leq \beta \leq 2$, as seen on subfigures (c) and (d). On the other subfigures, the inflection points are indicated with vertical dotted lines. For $\beta < 0$, the divergence possesses horizontal asymptotes of coordinate $x^\beta / (\beta (\beta -1))$ as $y \rightarrow \infty$. For $\beta > 1$, the divergence takes finite value $x^\beta / (\beta (\beta -1))$ at $y=0$, where the derivative is zero for $\beta >2$.
}
\label{fig:betadiv}
\end{figure}

The first and second derivative of $d_{\beta}(x|y)$ w.r.t $y$ are continuous in $\beta$, and write
\bal
d'_{\beta}(x|y) &= y^{\beta-2} \, (y-x), \label{eqn:deriv} \\
d''_{\beta}(x|y) &= y^{\beta-3} \, \left[ (\beta-1) y - (\beta-2) x \right].
\eal
The derivative shows that $d_{\beta}(x|y)$, as a function of $y$, has a single minimum in $y = x$ and that it increases with $|y-x|$, justifying its relevance as a measure of fit. The second derivative shows that the $\beta$-divergence is convex w.r.t $y$ for $\beta \in [1,2]$.
Outside this interval the divergence can always be expressed as the sum of a convex, concave and constant part, such that
\bal
d_\beta(x|y) = \conv{d}(x|y) + \conc{d}(x|y) + \bar{d}(x)
\label{eq:3c}
\eal
where $\conv{d}(x|y)$ is a convex function of $y$, $\conc{d}(x|y)$ is a concave function of $y$ and $\bar{d}(x)$ is a constant of $y$. {The decomposition is not unique, since constant or linear terms (w.r.t $y$) are both convex and concave, or, less trivially, since any convex term can be added to $\conv{d}(x|y)$ while subtracted from $\conc{d}(x|y)$.}
In the following we will use the ``natural conventions'' given in Table~\ref{tab:beta}. 

\begin{table}[t]
\centering
\renewcommand{\arraystretch}{1.4}
\begin{tabular}{|c|c|c||c|c||c|}
\hline
 & $\conv{d}(x|y)$ & $\conv{d}'(x|y)$ & $\conc{d}(x|y)$ & $\conc{d}'(x|y)$ & $\bar{d}(x)$ \\
\hline
$\beta < 1 $ and $\beta \not= 0$ &  $- \frac{1}{\beta-1} x \,y^{\beta -1} $ &  $- x\, y^{\beta-2} $& $\frac{1}{\beta} y^\beta  $  & $ y^{\beta-1}$ & $\frac{1}{\beta (\beta-1)} x^\beta $\\
\hline
$\beta = 0$ & $ x \, y^{-1} $ & $- x \, y^{-2}$ & $\log{y} $ & $y^{-1}$ & $ x (\log x - 1) $\\
\hline
$1 \le \beta \le 2$ & $d_\beta(x|y)$ & $d_\beta'(x|y)$ & 0 & 0 & 0  \\
\hline
$\beta >2$  &  $\frac{1}{\beta} y^\beta  $ & $ y^{\beta-1} $ & $- \frac{1}{\beta-1} x \, y^{\beta -1} $ & $- x\, y^{\beta-2}$ & $\frac{1}{\beta (\beta-1)} x^\beta$ \\
\hline
\end{tabular}
\caption{Example of differentiable convex-concave-constant decomposition of the $\beta$-divergence under the form \eqref{eq:3c}.}
\label{tab:beta}
\end{table}

As noted by \cite{neco09}, a noteworthy property of the $\beta$-divergence is its behavior w.r.t to scale, as the following equation holds for any value of $\beta$:
\begin{equation}
d_{\beta}(\lambda \, x | \lambda \, y) = \lambda^{\beta} \ d_{\beta}(x | y).
\end{equation}
It implies that factorizations obtained with $\beta > 0$ (such as with the Euclidean distance or the KL divergence) will rely more heavily on the largest data values and less precision is to be expected in the estimation of the low-power components, and conversely factorizations obtained with $\beta < 0$ will rely more heavily on smallest data values. The IS divergence ($\beta = 0$) is scale-invariant, i.e., $d_{IS}(\lambda \, x | \lambda \, y) = d_{IS}( x |  y)$, and is the only one in the family of $\beta$-divergences to possess this property. Factorizations with small positive values of $\beta$ are relevant to decomposition of audio spectra, which typically exhibit exponential power decrease along frequency $f$ and also usually comprise low-power transient components such as note attacks together with higher power components such as tonal parts of sustained notes. For example, \cite{neco09} present the results of the decomposition of a piano power spectrogram with IS-NMF and show that components corresponding to very low residual noise and hammer hits on the strings are extracted with great accuracy, while these components are either ignored or severely degraded when using Euclidean or KL divergences. Similarly, the value $\beta = 0.5$ is advocated by \cite{fitz09,henn10} and has been shown to give optimal results in music transcription based on NMF of the magnitude spectrogram by \cite{vinc10}.

The $\beta$-divergence belongs to the family of Bregman divergences. For $\beta \not\in \{0,1\}$, a suitable Bregman generating function is $\phi(y) = y^\beta / (\beta (\beta-1))$, as noted by \cite{eusipco09}. This function, however, cannot generate the IS and KL divergences by continuity when $\beta$ tends to 0 or 1. The latter divergences may nonetheless be generated ``separately'', using the functions $\phi(y) = -\log y$ and $\phi(y) =  y \log y$, respectively. \cite{cic10} give a general Bregman generating function of the $\beta$-divergence, defined for all $\beta \in \mathbb{R}$, in the form of $\phi_{\beta\not=0,1}(y) = (y^\beta - \beta y + \beta -1) / (\beta (\beta-1))$, $\phi_{\beta=0}(y) = y - \log y -1$ and $\phi_{\beta=1}(y) = y \log y - y +1$. NMF with Bregman divergences has been considered by \cite{dhi05}, where the lack of results about the monotonicity of multiplicative algorithms in general has been noted.\footnote{More precisely, \cite{dhi05} give proofs of monotonicity for the ``reverse'' problem of minimizing $D(\W \H | \V)$ instead of $D(\V | \W \H)$, while pointing that monotonicity of multiplicative algorithms based on the heuristic~\eqref{eqn:heur} for the latter problem is however observed in practice.} This paper fills this gap for the specific case of $\beta$-divergence.

\subsection{Task}

\paragraph{Core optimization problem} As to our best knowledge all algorithms in the literature to date, the NMF algorithms we describe in this paper sequentially update $\H$ given $\W$ and then $\W$ given $\H$. These two steps are essentially the same, by symmetry of the factorization ($\V\approx \W \H$ is equivalent to $\V^T \approx \H^T \W^T$ and the roles of $\W$ and $\H$ are simply exchanged), and because we are not making any assumption on the relative values of $F$ and $N$. Hence, we may concentrate on solving the following subproblem
\begin{equation} \label{eqn:mini2}
\underset{\H}{\text{min}} \ C(\H) \defequal D(\V | \W \H) \ \text{subject to} \ \H \ge 0
\end{equation}
with fixed $\W$ and where in the rest of the paper $D(\V | \W \H)$ is as of Eq.~\eqref{eqn:defcost} with $d(x|y) = d_\beta(x|y)$. The criterion function $C(\H)$ separates into $\sum_n D(\ve{v}_n | \W \ve{h}_n)$, where $\ve{v}_n$ and $\ve{h}_n$ are the $n^{th}$ row of $\V$ and $\H$, respectively, so that we are essentially left with solving the problem
\begin{equation} \label{eqn:minih}
\underset{\hh}{\text{min}} \ C(\hh) = D(\vv | \W \hh) \ \text{subject to} \ \hh \ge 0
\end{equation}
where $\vv \in \mathbb{R}_+^{F}$, $\W \in \mathbb{R}_+^{F \times K}$ and $\hh \in \mathbb{R}_+^{K}$.

\paragraph{KKT necessary conditions} An admissible solution $\ve{h}^{\star}$ to problem~\eqref{eqn:minih} must satisfy the Karush-Kuhn-Tucker (KKT) first order optimality conditions, which write
\bal
\nabla_{\ve{h}} C(\ve{h}^{\star}) . \ve{h}^{\star} &=  0 \label{eqn:kkt1} \\
\nabla_{\ve{h}} C(\ve{h}^{\star}) &\ge 0  \label{eqn:kkt2}\\
\ve{h}^{\star} &\ge 0 \label{eqn:kkt3}
\eal
where the dot notation `$.$' denotes entrywise operations (here term-to-term multiplication) and $\nabla_{\ve{h}} C(\ve{h})$ denotes the gradient of $C(\hh)$, given by
\bal
\nabla_{\ve{h}} C(\ve{h}) &= \ve{W}^T \, [d'(v_f | [\W \hh]_f)]_f \\
&= \W^T [ (\W \hh)^{.(\beta-2)} (\W \hh - \ve{v})]
\label{eq:gradient}
\eal
where the notation $[x_f]_f$ refers to the column vector $[x_1,\ldots,x_F]^T$. The KKT conditions~\eqref{eqn:kkt1}-\eqref{eqn:kkt3} can be summarized as
\begin{equation} \label{eqn:kkt}
\text{min} \{ \ve{h}^\star, \nabla_{\ve{h}} C(\ve{h}^{\star}) \} = \ve{0}_K
\end{equation}
where the ``min'' operator is entrywise and $\ve{0}_K$ is a null vector of dimension $K$. 

\paragraph{Algorithms} In the following, we will say that an algorithm is \emph{monotone} if it produces a sequence of iterates $\{ \hh^{(i)} \}_{i \ge 0}$, such that $C(\hh^{(i+1)}) \le C(\hh^{(i)})$ for all $i \ge 0$. An algorithm is said \emph{convergent} if it produces a sequence of iterates $\{ \hh^{(i)} \}_{i \ge 0}$ which converges to a limit point $\hh^{\star}$ satisfying the KKT conditions~\eqref{eqn:kkt1}-\eqref{eqn:kkt3}. Monotonicity does not imply convergence in general, nor is monotonicity necessary to convergence.

\section{An auxiliary function for $\beta$-NMF} \label{sec:betaauxfun}

In this section we properly define the concept of auxiliary function and then exhibit a separable auxiliary function for the $\beta$-NMF problem.

\subsection{Definition of auxiliary function}
\label{subsec:defaux}

\begin{mydefinition}[Auxiliary function] The $\RR_+^{K} \times \RR_+^{K} \rightarrow \RR_+$ mapping $G(\ve{h} | \tilde{\ve{h}} )$ is said to be an \emph{auxiliary function} to $C(\hh)$ if and only if 
\begin{itemize}
\item $\forall \hh \in \RR_+^K$,  $C(\ve{h}) = G(\ve{h} | \ve{h})$
\item $\forall (\hh,\tih) \in \RR_+^K \times \RR_+^K$, $C(\ve{h}) \le G(\ve{h} | \tilde{\ve{h}}) $ 
\end{itemize}
\end{mydefinition}

\bigskip

In other words an auxiliary function $G(\ve{h} | \tih)$ is a \emph{majorizing function} or \emph{upper bound} of $C(\hh)$ which is tight for $\hh = \tih$. The optimization of $C(\hh)$ can be replaced by iterative optimization of $G(\hh | \tih)$. Indeed, any iterate $\hh^{(i+1)}$ satisfying  
\bal \label{eqn:idea}
 G(\hh^{(i+1)} | \hh^{(i)} ) \le G(\hh^{(i)} | \hh^{(i)})
 \eal
satisfies $C(\hh^{(i+1)}) \le C(\hh^{(i)})$, because we have
\bal \label{eqn:algo}
C(\hh^{(i+1)}) \le G(\hh^{(i+1)} | \hh^{(i)}) \le G(\hh^{(i)} | \hh^{(i)}) = C(\hh^{(i)}).
\eal
The iterate $\hh^{(i+1)} $ is typically chosen as
\begin{equation} \label{eqn:max}
\ve{h}^{(i+1)} = \argmin_{\ve{h} \ge 0} \ G(\ve{h} | \ve{h}^{(i)})
\end{equation}
which forms the basis of \emph{maximization-minimization} (MM) algorithms, see, e.g., \citep{hunt04} for a tutorial. However, any other iterate $\hh^{(i+1)}$ satisfying~\eqref{eqn:idea} produces a monotone algorithm. {As such, Figure~\ref{fig:ME} illustrates the three updates strategies that will be developed in this paper.}

\begin{figure}[t]
\centering
\epsfig{figure=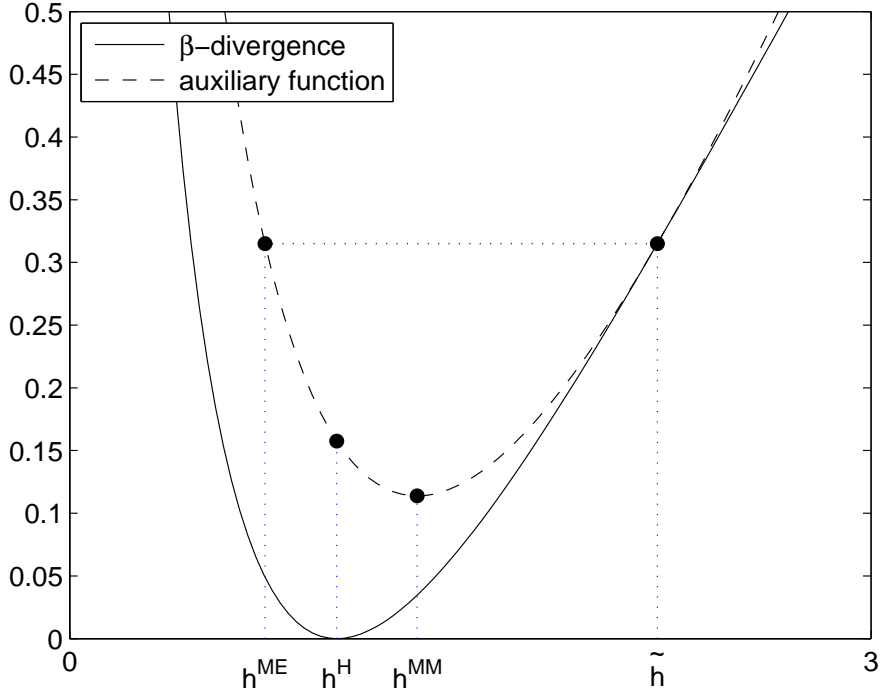,width=0.85\linewidth}
\caption{
{
The $\beta$-divergence $d_\beta(x|y)$ for $\beta = 0.5$ (with $x =1$) and its auxiliary function in dimension one (with $\tilde{h}$ = 2.2). The MM update $h^{\text{MM}}$ corresponds to the minimum of the auxiliary function, see Section~\ref{sec:mm}. The heuristic update $h^{\text{H}}$ given by Eq.~\eqref{eqn:heur} is discussed in Section~\ref{sec:heur} (the heuristic update minimizes the criterion function in the simple one-dimensional case but this is not true in larger dimensions). The ME update $h^{\text{ME}}$ consists in selecting the next update ``beyond the valley'' defined by the auxiliary function, from the current solution $\tilde{h}$, see Section~\ref{sec:me}.}}
\label{fig:ME}
\end{figure}

\subsection{Separable auxiliary function for $\beta$-NMF} \label{sec:auxfun}

In this section we construct an auxiliary function to $C(\hh)$ for the specific case of the $\beta$-divergence. Our approach follows the one of \cite{cao99} for IS divergence, and consists of majorizing the convex part of the criterion using Jensen's inequality and majorizing the concave part by its tangent, as detailed in the proof of the following theorem. Here and henceforth, we denote $\W \tih$ by $\tilde{\ve{v}}$, with entries $[\W \tih]_f=\tilde{v}_f$.

\begin{mytheorem}[Auxiliary function for $\beta$-NMF] \label{theo:auxfunc}
Let $\tih$ be such that 
\begin{itemize}
\item[(i)]  $\forall f, \tilde{v}_f > 0$
\item[(ii)] $\forall k, \tilde{h}_{k} > 0$
\end{itemize}
Then, the function $G(\hh|\tih)$ defined by 
\bal
G(\hh | \tih) = \sum_f  \left[\sum_{k} \frac{w_{fk} \tilde{h}_{k}}{\tilde{v}_f} \ \conv{d} \left( v_f |  \tilde{v}_f \frac{h_k}{\tilde{h}_k} \right) \right]  + \left[ \conc{d}'(v_f|\tilde{v}_f) \sum_k w_{fk}  (h_k - \tilde{h}_k) + \conc{d}(v_f | \tilde{v}_f) \right] + \bar{d}(v_f) \label{eqn:auxbeta}
\eal
is an auxiliary function to $C(\hh) = \sum_f d(v_f| [\W \hh]_f)$, where $ \conv{d}(x|y) + \conc{d}(x|y) + \bar{d}(x)$ is any differentiable convex-concave-constant decomposition of the $\beta$-divergence, such as the one defined in Table~\ref{tab:beta}.
\end{mytheorem}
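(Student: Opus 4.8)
The plan is to verify the two defining properties of an auxiliary function termwise in the sum $C(\hh)=\sum_f d_\beta(v_f|[\W\hh]_f)$, using the convex-concave-constant splitting $d_\beta=\conv{d}+\conc{d}+\bar d$: majorize the convex part of each term by Jensen's inequality, majorize the concave part by its tangent at the current point, leave the constant part alone, and then sum over $f$. Throughout, fix $f$, write $[\W\hh]_f=\sum_k w_{fk}h_k$, and note that by hypothesis (i) we have $\tilde v_f=\sum_k w_{fk}\tilde h_k>0$.

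For the convex part I would introduce the weights $\lambda_{fk}=w_{fk}\tilde h_k/\tilde v_f\ge 0$, which are well defined by hypotheses (i)-(ii) and which, crucially, sum to one over $k$ precisely because $\tilde v_f=\W\tih$. Writing $\sum_k w_{fk}h_k=\sum_{k:\lambda_{fk}>0}\lambda_{fk}\,(\tilde v_f h_k/\tilde h_k)$ and applying Jensen's inequality to the convex map $y\mapsto\conv{d}(v_f|y)$ on $\RR_+$ yields
\[
\conv{d}\Big(v_f \,\Big|\, \sum_k w_{fk}h_k\Big)\;\le\;\sum_k \frac{w_{fk}\tilde h_k}{\tilde v_f}\,\conv{d}\Big(v_f \,\Big|\, \tilde v_f\frac{h_k}{\tilde h_k}\Big),
\]
which is exactly the first bracket of \eqref{eqn:auxbeta}; indices with $w_{fk}=0$ carry zero weight and simply drop out of the convex combination, so no convention on $0\cdot(\cdot)$ is required. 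For the concave part, since $y\mapsto\conc{d}(v_f|y)$ is concave and differentiable it lies below its tangent at $\tilde v_f$, so
\begin{align*}
\conc{d}\Big(v_f \,\Big|\, \sum_k w_{fk}h_k\Big)
&\;\le\;\conc{d}(v_f|\tilde v_f)+\conc{d}'(v_f|\tilde v_f)\Big(\sum_k w_{fk}h_k-\tilde v_f\Big)\\
&=\conc{d}(v_f|\tilde v_f)+\conc{d}'(v_f|\tilde v_f)\sum_k w_{fk}(h_k-\tilde h_k),
\end{align*}
again using $\tilde v_f=\sum_k w_{fk}\tilde h_k$; this is the second bracket. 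Adding these two inequalities to the untouched $\bar d(v_f)$ and summing over $f$ gives $C(\hh)\le G(\hh|\tih)$ for all $\hh\ge 0$.

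It remains to check tightness at $\hh=\tih$: then every argument in the Jensen bound collapses to $\tilde v_f h_k/\tilde h_k=\tilde v_f$, so that inequality becomes an equality; the tangent of the concave part touches $\conc{d}$ at $\tilde v_f$; and the constant is unchanged. Hence $G(\tih|\tih)=C(\tih)$, which together with the majorization completes the proof. I do not expect a real obstacle: the argument is a direct two-sided use of Jensen's inequality and of the tangent inequality, working for \emph{any} valid differentiable convex-concave-constant decomposition. The only points deserving explicit care are (a) verifying that the Jensen weights $\lambda_{fk}$ sum to one, which is where the normalization $\tilde v_f=\W\tih$ is used, and (b) the role of hypotheses (i)-(ii) in making every ratio and weight well defined; one should also remark that for $\beta<1$ the bound $G$ may equal $+\infty$ on the boundary where some $h_k=0$, which is harmless for the majorization property and for the interior iterates of the resulting algorithms.
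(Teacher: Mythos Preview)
Your proof is correct and follows essentially the same route as the paper: majorize the convex part of each summand via Jensen's inequality with the weights $\lambda_{fk}=w_{fk}\tilde h_k/\tilde v_f$, majorize the concave part by its tangent at $\tilde v_f$, leave the constant, and sum over $f$. The paper's argument is identical in structure, including the explicit handling of indices with $w_{fk}=0$; your additional remark about $G$ possibly taking the value $+\infty$ on the boundary for $\beta<1$ is a harmless observation not present in the paper but fully compatible with it.
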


\bigskip

\begin{proof} The condition $G(\ve{h}|{\ve{h}}) = C(\hh) $ is trivially met. The criterion $C(\hh)$ may be written as
\bal
C(\ve{h}) &= \sum_f C_f(\ve{h})
\eal
where $C_f(\ve{h}) \defequal d(v_f | [\ve{W} \ve{h}]_f)$. We prove $C(\ve{h}) \le G(\ve{h} | \tilde{\ve{h}})$ by constructing an auxiliary function to each part $C_f(\hh)$ of the criterion, and more precisely by treating the convex and concave part separately. Let us define  $\conv{C}_f(\hh) \defequal \conv{d}(v_f | [\ve{W} \hh]_f)$ and $\conc{C}_f(\hh) \defequal \conc{d}(v_f | [\ve{W} \hh]_f)$, so that we can write
\begin{equation}
C_f(\hh) =  \conv{C}_f(\hh) + \conc{C}_f(\hh) + \bar{d}(v_f).
\label{eq:c3}
\end{equation}

\noindent \emph{Convex part}:  We first prove that
\bal \label{eqn:auxconvf}
\conv{G}_f(\ve{h}|\tilde{\ve{h}}) = \sum_{k} \frac{w_{fk} \tilde{h}_{k}}{\tilde{v}_f} \ \conv{d} \left( v_f |  \tilde{v}_f \frac{h_k}{\tilde{h}_k} \right)
\eal
is an auxiliary function to $\conv{C}_f(\hh)$. The condition $\conv{G}_f(\ve{h}|{\ve{h}}) = \conv{C}_f(\hh) $ is trivially met. The condition $\conv{G}_f(\hh|\tih) \ge \conv{C}_f(\tih)$ is proven as follows. Let $\mathcal{K}$ be the set of indices $k$ such that $w_{fk} \not= 0$. Define $\forall k \in \mathcal{K}$,
\bal
\tilde{\lambda}_{fk} = \frac{w_{fk} \tilde{h}_{k}}{\tilde{v}_f} = \frac{w_{fk} \tilde{h}_{k}}{\sum_{\ell \in \mathcal{K}} w_{f\ell} \tilde{h}_{\ell}}.
\eal
We have $\sum_{k \in \mathcal{K}} \tilde{\lambda}_{fk} = 1$ and 
\bal
\conv{G}_f(\ve{h}|\tilde{\ve{h}}) &=  \sum_{k \in \mathcal{K}} \tilde{\lambda}_{fk} \ \conv{d} \left( v_f | \frac{w_{fk} h_{k}}{ \tilde{\lambda}_{fk}} \right) \\
& \ge  \conv{d} \left( v_f | \sum_{k \in \mathcal{K}} \tilde{\lambda}_{fk} \frac{w_{fk} h_{k}}{ \tilde{\lambda}_{fk}} \right) \\
&= \conv{d} \left( v_f | \sum_{k=1}^K w_{fk} h_{k} \right) \\
&= \conv{C}_f(\hh)
\eal
where we used Jensen's inequality, by convexity of $\conv{d}(x|y)$.\\

\noindent \emph{Concave part}: 
An auxiliary function $\conc{G}_{f}(\hh|\tih)$ to the concave part $\conc{C}_f(\hh)$ can be taken as the first order Taylor approximation to $\conc{C}_f(\hh)$ in the vicinity of $\tih$, i.e.,
\begin{equation} \label{eqn:auxbetaconc}
\conc{G}_{f}(\hh|\tih) = \conc{C}_f(\tih) + \nabla^T \conc{C}_f(\tih)  \ (\hh - \tih). 
\end{equation}
The function satisfies $\conc{G}_{f}(\hh|\hh) = \conc{C}_f(\hh)$ by construction and $\conc{G}_{f}(\hh|\tih) \ge \conc{C}_f(\hh)$ by concavity of $\conc{C}_f(\hh)$, using the property that the tangent to any point is an upper bound of a concave function.\footnote{$\conc{C}_f(\hh)= \conc{d}(v_f | [\ve{W} \hh]_f)$ is concave as the composition of a concave function and a linear function.}
Using
\bal
\nabla_{h_k} \conc{C}_f(\hh) = w_{fk} \conc{d}'(v_f|[\W \hh]_f)
\eal
the explicit form for $\conc{G}_{f}(\hh|\tih)$ is given by
\bal
\conc{G}_{f}(\hh|\tih) = \conc{d}(v_f | \tilde{v}_f) + \conc{d}'(v_f|\tilde{v}_f) \sum_k w_{fk}  (h_k - \tilde{h}_k).
\eal

In the end a suitable auxiliary function $G(\hh | \tih)$ to $C(\hh)$ is obtained by summing up the auxiliary functions constructed for each individual part of the criterion, i.e.,
\begin{equation}
G(\hh | \tih) = \sum_f \left( \conv{G}_{f}(\hh|\tih) + \conc{G}_{f}(\hh|\tih) + \bar{d}(v_f) \right)
\end{equation}
which leads to Eq.~\eqref{eqn:auxbeta}.
\end{proof}

\paragraph{Properties of the auxiliary function} 
$G(\hh | \tih)$ is by construction separable in functions of the individual coefficients $h_k$ of $\hh$, which allows to decouple the optimization. It is convenient to rewrite the auxiliary function as such in order to derive some of the algorithms of Section~\eqref{sec:algos}. We may write
\bal
G(\hh | \tih) = \sum_k G_k(h_k | \tih) + \cst
\eal
where $\cst$ is a constant w.r.t $\hh$ and 
\bal
G_k(h_k | \tih) \defequal \tilde{h}_k \left[\sum_f \frac{w_{fk} }{ \tilde{v}_f} \conv{d}\left(v_f |\tilde{v}_f \frac{h_k}{\tilde{h}_k}\right)\right]  + h_k \left[ \sum_f w_{fk}\conc{d}'(v_f| \tilde{v}_f)\right] .
\label{eq:Gk}
\eal
The gradient of the auxiliary function is given by
\bal \label{eqn:gradconvconc}
\nabla_{h_k} G(\hh | \tih) = \sum_f w_{fk} \, \left[ \conv{d}' \left(v_f |  \tilde{v}_f \frac{h_k}{\tilde{h}_k} \right) + \conc{d}'(v_f|\tilde{v}_f) \right].
\eal
Thanks to the separability of the auxiliary function into its variables the Hessian matrix is diagonal with
\bal 
\nabla_{h_k}^2 G(\hh | \tih) = \sum_f   \tilde{v}_f \frac{w_{fk}}{\tilde{h}_k}  \, \conv{d}'' \left(v_f |  \tilde{v}_f \frac{h_k}{\tilde{h}_k} \right).
\eal
By convexity of $\conv{d}(x|y)$ we have  $\conv{d}''(x|y) \ge 0$ which implies positive definiteness of the Hessian matrix and hence convexity of the auxiliary function $G(\hh | \tih)$ (convexity more simply derives from the fact that the auxiliary function is built as a sum of convex functions). \\

\paragraph{Connections with other works} The construction of $G(\hh|\tih)$ employs standard mathematical tools (Jensen's inequality, Taylor approximation) that are well known from the MM literature, see, e.g., \citep{hunt04}. For $\beta \in [1,2]$, $G(\hh|\tih)$ coincides with the auxiliary function built by \cite{kom07}. This latter paper proposed itself a generalization of the auxiliary functions proposed by \cite{lee01} for the Euclidean distance ($\beta=2$) and the generalized KL divergence ($\beta=1$). For $\beta = 0$ (IS divergence), $G(\hh|\tih)$ coincides with the auxiliary function proposed by \cite{cao99}. It is worth recalling that in the algorithms proposed by \cite{lee01} the update of $\W$ given $\H$ or $\H$ given $\W$ are instances of well known algorithms for image restoration (for which $\W$ acts as a fixed, known blurring matrix and $\H$ is a vectorized image to be reconstructed). These algorithms are the Iterative Space Reconstructing Algorithm (ISRA) of \cite{daub86} and the Richardson-Lucy (RL) algorithm of \cite{rich72,lucy74}, which perform nonnegative linear regression with the Euclidean distance and KL divergence, respectively. The ISRA and RL algorithms are shown to be MM algorithms by \cite{depi93}. Similarly, the algorithms proposed by \cite{cao99} for nonnegative linear regression with the IS divergence were designed in the image restoration setting. Finally, let us mention that an auxiliary function based on Jensen's inequality for NMF with the $\alpha$-divergence (which is always convex w.r.t to its second argument) is given by \cite{cic08b}.

\section{Algorithms for $\beta$-NMF} \label{sec:algos}

In this section we describe algorithms for $\beta$-NMF based on the auxiliary function constructed in the latter section. In the following $\tih$ should be understood as the current iterate $\hh^{(i)}$ and we are seeking to obtain $\hh^{(i+1)}$ such that Eq.~\eqref{eqn:idea} is satisfied.

\subsection{Maximization-Minimization (MM) algorithm} \label{sec:mm}

An MM algorithm can be derived by minimizing the auxiliary function $G(\hh|\tih)$ w.r.t to $\hh$. Given the convexity and the separability of the auxiliary function the optimum is obtained by canceling the gradient given by Eq.~\eqref{eqn:gradconvconc}. This is trivially done and leads to the following update:
\bal \label{eqn:upbeta}
h_k^\text{MM} = \tilde{h}_k \left(  \frac{\sum_f w_{fk} \, v_f \, \tilde{v}_f^{\beta-2} }{\sum_f w_{fk} \, \tilde{v}_f^{\beta-1}} \right)^{\gamma(\beta)}
\eal
where ${\gamma(\beta)}$ is given in Table~\ref{tab:gamma}.
\begin{table}[t]
\begin{center}
\begin{tabular}{|c|c|c|c|}
\hline
                            &  $\beta <1$ & $1 \le \beta \le 2$ & $\beta >2$ \\
\hline
$\gamma(\beta)$ &  ${\frac{1}{2-\beta}}$ & 1 &  $\frac{1}{\beta-1}$ \\
\hline
\end{tabular}
\end{center}
\caption{Exponent in the multiplicative updates given by the MM algorithm.}
\label{tab:gamma}
\end{table}
Note that $\gamma(\beta) \le 1, \forall \beta$. As suggested in Section~\ref{sec:intro}, the gradient of the criterion may be written as the difference of two nonnegative functions such that
\bal
\nabla_{h_k} C(\tih) &= \nabla_{h_k}^+ C(\tih) -  \nabla_{h_k}^- C(\tih) \\
\nabla_{h_k}^+ C(\tih) &= \sum_f w_{fk} \, \tilde{v}_f^{\beta-1}  \\
\nabla_{h_k}^- C(\tih) &=  \sum_f w_{fk} \, v_f \, \tilde{v}_f^{\beta-2}
\eal
so that the update \eqref{eqn:upbeta} can be rewritten in the more interpretable form
\bal
h_k^\text{MM} = \tilde{h}_k \left(  \frac{ \nabla_{h_k}^- C(\tih) }{ \nabla_{h_k}^+ C(\tih)} \right)^{\gamma(\beta)}.
\eal
The conclusion is thus that the MM algorithm leads to multiplicative updates, but the latter differ from the ``usual ones'', obtained by setting $\gamma(\beta) =1$ for all $\beta$ and derived heuristically by \cite{cic06a} through gradient descent with adaptative step or by \cite{neco09} by splitting the gradient into two nonnegative functions as discussed above and in Section~\ref{sec:intro}. The MM update differs from the heuristic update by the exponent $\gamma(\beta)$ which is not equal to one for $\beta \not\in [1,2]$.

\subsection{Heuristic algorithm} \label{sec:heur}

This section discusses the properties of the heuristic update proposed by \cite{cic06a,neco09} and defined for all $\beta$ by
\bal
h_k^{\text{H}} = \tilde{h}_k \left(  \frac{\sum_f w_{fk} \, v_f \, \tilde{v}_f^{\beta-2} }{\sum_f w_{fk} \, \tilde{v}_f^{\beta-1}} \right).
\label{eq:heur_update}
\eal
Very few mathematical results exist for the heuristic update when $\beta$ falls outside $[1,2]$, i.e., when the $\beta$-divergence $d_{\beta}(x | y)$ is not convex. In such a case, the heuristic update can be erroneously interpreted as an MM algorithm by wrongly applying Jensen's inequality to $C(\hh)$.
Yet, in the particular case $\beta=0$, it holds that each heuristic update produces a decrease of $C(\hh)$ \citep{cao99}. One objective of this section is to extend this result to all values of $\beta$ between 0 and 1. \\

Let us first introduce a scalar auxiliary function $g(y|\tilde{y};x)$ as follows:
\bal
\forall y,\tilde{y},x>0,\quad g(y|\tilde{y};x)=\conv{d}(x|y) + \conc{d}(x|\tilde{y}) + (y-\tilde{y})\conc{d}'(x|\tilde{y}) + \bar{d}(x)
\eal
where $\conv{d}(x|y)$, $\conc{d}(x|y)$ and $\bar{d}(x|y)$ are defined in Table~\ref{tab:beta}. By immediate application of Theorem~\ref{theo:auxfunc} to the scalar case, $g(y|\tilde{y};x)$ is an auxiliary function to $d(x|y)$. In particular, $g(\tilde{y}|\tilde{y};x)=d(x|\tilde{y})$.
Then, we have the following preliminary result.\\

\begin{mylemma} \label{lemma:Gproptog}
For all $\beta\in\mathbb{R}$,
\begin{equation}
G_k(h_k | \tih)=\frac1{\tilde{h}_k^{\beta-1}}\left(\sum_f w_{fk}\tilde{v}_f^{\beta-1}\right) g(h_k | \tilde{h}_k ; h_k^{\text{H}}) + \cst.
\label{eq:Gproptog}
\end{equation}
\end{mylemma}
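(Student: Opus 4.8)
The plan is to establish \eqref{eq:Gproptog} by differentiation in $h_k$. Write $c_k := \tilde{h}_k^{1-\beta}\sum_f w_{fk}\tilde{v}_f^{\beta-1}$; this factor, like $h_k^{\text{H}}$, depends only on $\tih$ and hence is constant in $h_k$. It therefore suffices to prove that $G_k(h_k\,|\,\tih)$ and $c_k\, g(h_k\,|\,\tilde{h}_k;h_k^{\text{H}})$ have the same derivative in $h_k$ on $(0,\infty)$, the unspecified $\cst$ being the constant of integration (and the identity extending to $h_k=0$ by continuity whenever the relevant exponents are nonnegative). First I would fix the shorthand $P_k := \sum_f w_{fk}\tilde{v}_f^{\beta-1}$ and $M_k := \sum_f w_{fk}v_f\tilde{v}_f^{\beta-2}$ — the two nonnegative halves of $\nabla_{h_k}C(\tih)$ — so that, by \eqref{eq:heur_update}, $h_k^{\text{H}} = \tilde{h}_k M_k/P_k$, i.e. $P_k h_k^{\text{H}} = \tilde{h}_k M_k$, and $c_k = P_k/\tilde{h}_k^{\beta-1}$.

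Since $G(\hh|\tih)=\sum_k G_k(h_k|\tih)+\cst$ is separable, \eqref{eqn:gradconvconc} gives $\nabla_{h_k}G_k(h_k\,|\,\tih) = \sum_f w_{fk}\bigl[\conv{d}'(v_f\,|\,\tilde{v}_f h_k/\tilde{h}_k) + \conc{d}'(v_f\,|\,\tilde{v}_f)\bigr]$, while differentiating the definition of $g$ term by term gives $\partial_{h_k} g(h_k\,|\,\tilde{h}_k;h_k^{\text{H}}) = \conv{d}'(h_k^{\text{H}}\,|\,h_k) + \conc{d}'(h_k^{\text{H}}\,|\,\tilde{h}_k)$ (the summands $\conc{d}(h_k^{\text{H}}|\tilde{h}_k)$ and $\bar{d}(h_k^{\text{H}})$ are constant in $h_k$, and $(h_k-\tilde{h}_k)\conc{d}'(h_k^{\text{H}}|\tilde{h}_k)$ contributes $\conc{d}'(h_k^{\text{H}}|\tilde{h}_k)$). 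So the lemma reduces to showing, for every $\beta\in\mathbb{R}$,
\[
\sum_f w_{fk}\,\conv{d}'\!\Bigl(v_f\,\Big|\,\tilde{v}_f\tfrac{h_k}{\tilde{h}_k}\Bigr) = c_k\,\conv{d}'(h_k^{\text{H}}\,|\,h_k)
\quad\text{and}\quad
\sum_f w_{fk}\,\conc{d}'(v_f\,|\,\tilde{v}_f) = c_k\,\conc{d}'(h_k^{\text{H}}\,|\,\tilde{h}_k).
\]
The observation that makes both hold uniformly in $\beta$ is that, for every row of Table~\ref{tab:beta}, each of $\conv{d}'(x|y)$ and $\conc{d}'(x|y)$ has the monomial form $\phi_{a,b}(x|y):=a\,y^{\beta-1}+b\,x\,y^{\beta-2}$ with $a,b\in\{-1,0,1\}$ fixed by $\beta$ (namely $\conv{d}'=\phi_{0,-1},\ \conc{d}'=\phi_{1,0}$ for $\beta<1$; $\conv{d}'=\phi_{1,-1},\ \conc{d}'=\phi_{0,0}$ for $1\le\beta\le2$; $\conv{d}'=\phi_{1,0},\ \conc{d}'=\phi_{0,-1}$ for $\beta>2$), and that for any $a,b$ a one-line computation using $\sum_f w_{fk}\tilde{v}_f^{\beta-1}=P_k$, $\sum_f w_{fk}v_f\tilde{v}_f^{\beta-2}=M_k$ and $P_kh_k^{\text{H}}=\tilde{h}_kM_k$ yields $\sum_f w_{fk}\phi_{a,b}(v_f\,|\,\tilde{v}_f h_k/\tilde{h}_k)=c_k\,\phi_{a,b}(h_k^{\text{H}}\,|\,h_k)$ as well as $\sum_f w_{fk}\phi_{a,b}(v_f\,|\,\tilde{v}_f)=c_k\,\phi_{a,b}(h_k^{\text{H}}\,|\,\tilde{h}_k)$. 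Instantiating the first with the $(a,b)$ of $\conv{d}'$ and the second with the $(a,b)$ of $\conc{d}'$ gives the two displayed identities, hence equality of derivatives, hence \eqref{eq:Gproptog}.

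I do not expect a genuine obstacle: the lemma is essentially bookkeeping, its substance being the homogeneity of the monomials $y^{\beta-1}$ and $x\,y^{\beta-2}$ together with the definition of $h_k^{\text{H}}$. The only points needing care are (i) verifying the $\phi_{a,b}$ form of $\conv{d}'$ and $\conc{d}'$ row by row in Table~\ref{tab:beta}, including the borderline values $\beta\in\{0,1,2\}$ and the degenerate case $\conc{d}'\equiv0$ on $[1,2]$, and (ii) the domain issue noted above. An alternative, more computational route would expand $G_k(h_k|\tih)$ and $c_k\,g(h_k|\tilde{h}_k;h_k^{\text{H}})$ directly from Table~\ref{tab:beta}, matching the $h_k$-dependent monomials and absorbing the rest into $\cst$; this needs the analogue of the transport identities for $\conv{d}$ itself (not just $\conv{d}'$) and is slightly messier on $1\le\beta\le2$ because of the $x^\beta$ term in $d_\beta$, which is why I would differentiate first.
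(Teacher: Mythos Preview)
Your argument is correct. The paper's own proof is a one-line appeal to ``direct verification'' of \eqref{eq:Gproptog} in each of the four rows of Table~\ref{tab:beta}, i.e.\ expanding $G_k$ and $c_k\,g$ from the explicit expressions of $\conv{d},\conc{d}$ and matching terms. Your route is a mild but genuine streamlining of this: by differentiating first you only have to match $\conv{d}'$ and $\conc{d}'$, which all share the monomial form $\phi_{a,b}(x|y)=a\,y^{\beta-1}+b\,x\,y^{\beta-2}$, and a single transport computation using $P_k h_k^{\text{H}}=\tilde{h}_k M_k$ then covers every case at once. The gain is that you never touch the constant terms of $\conv{d}$ and $\conc{d}$ (which vary by row and would otherwise need to be absorbed into $\cst$), and you make explicit that the lemma really rests on the homogeneity of $y^{\beta-1}$ and $x\,y^{\beta-2}$ together with the definition of the heuristic update. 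The paper's direct expansion is shorter to state but hides this structure; your approach is what one would actually write out if asked to supply details.
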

\begin{proof}
For each of the four possible expressions of $(\conc{d},\conv{d})$ given in Table~\ref{tab:beta}, the validity of \eqref{eq:Gproptog} can be checked straightforwardly by direct verification.
\end{proof}

As already mentioned in Section~\ref{subsec:defaux}, the MM update \eqref{eqn:max} is not the only way of taking advantage of the auxiliary function $G(\hh|\tih)$ to obtain a decrease of $C(\hh)$: any update satisfying \eqref{eqn:idea} also ensures that $C(\hh)$ does not increase.
This is a key remark to understand the behavior of the heuristic algorithm for $\beta\in(0,1)$, given the following property.\\

\begin{mytheorem}\label{theo:heuristic}
For all $\beta\in(0,1)$, and all $\tih$ such that conditions (i)-(ii) of Theorem~\ref{theo:auxfunc} hold, the heuristic algorithm produces nonincreasing values of $C(\hh)$, according to the following inequality:
\begin{equation}
G(\hh^{\text{H}} | \tih)\leq G(\tih | \tih).
\label{eq:decrmaj}
\end{equation}
\end{mytheorem}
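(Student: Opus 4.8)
The key reduction is to work coefficient-by-coefficient using Lemma~\ref{lemma:Gproptog}. Since $G(\hh|\tih) = \sum_k G_k(h_k|\tih) + \cst$ is separable, inequality~\eqref{eq:decrmaj} follows provided we show, for each $k$, that $G_k(h_k^{\text{H}} | \tih) \le G_k(\tilde{h}_k | \tih)$. By Lemma~\ref{lemma:Gproptog}, and because the prefactor $\tilde{h}_k^{1-\beta}\bigl(\sum_f w_{fk}\tilde{v}_f^{\beta-1}\bigr)$ is strictly positive under conditions (i)--(ii), this is equivalent to the scalar statement
\[
g(h_k^{\text{H}} \,|\, \tilde{h}_k \,;\, h_k^{\text{H}}) \;\le\; g(\tilde{h}_k \,|\, \tilde{h}_k \,;\, h_k^{\text{H}}).
\]
So the whole theorem collapses to a one-dimensional inequality about the scalar auxiliary function $g(y|\tilde{y};x)$: evaluated at $\tilde y = x$ (i.e.\ at the point where $g$ is tight against $d$, since $g(x|x;x)=d(x|x)$... more precisely $g(y|y;x)=d(x|y)$), we want $g(x|\tilde{y};x)\le g(\tilde{y}|\tilde{y};x)$ where here I am writing $x = h_k^{\text{H}}$ and $\tilde y = \tilde h_k$.

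First I would unpack the definition of $g$. Recalling $g(y|\tilde{y};x)=\conv{d}(x|y) + \conc{d}(x|\tilde{y}) + (y-\tilde{y})\conc{d}'(x|\tilde{y}) + \bar{d}(x)$, the difference $g(\tilde{y}|\tilde{y};x) - g(x|\tilde{y};x)$ equals $\bigl[\conv{d}(x|\tilde{y}) - \conv{d}(x|x)\bigr] + (\tilde{y}-x)\,\conc{d}'(x|\tilde{y})$, because $g(\tilde y|\tilde y;x) = d(x|\tilde y) = \conv d(x|\tilde y)+\conc d(x|\tilde y)+\bar d(x)$ and the concave/constant pieces of $g(x|\tilde y;x)$ are $\conc d(x|\tilde y)+(x-\tilde y)\conc d'(x|\tilde y)+\bar d(x)$. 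So I must show
\[
\conv{d}(x|\tilde{y}) - \conv{d}(x|x) \;\ge\; (x-\tilde{y})\,\conc{d}'(x|\tilde{y}).
\]
Now I would specialize to $\beta\in(0,1)$, so that from Table~\ref{tab:beta} (the $\beta<1,\ \beta\neq 0$ row, and the $\beta=0$ row handled analogously, or by the continuity argument) we have $\conv{d}(x|y)= -\tfrac{1}{\beta-1}x y^{\beta-1}$ and $\conc{d}'(x|\tilde y)=\tilde y^{\beta-1}$. Plugging in, the left side is $\tfrac{1}{1-\beta}x(\tilde y^{\beta-1}-x^{\beta-1})$ and the right side is $(x-\tilde y)\tilde y^{\beta-1}$, so after multiplying through by $1-\beta>0$ the claim becomes
\[
x\,\tilde y^{\beta-1} - x^{\beta} \;\ge\; (1-\beta)(x-\tilde y)\,\tilde y^{\beta-1},
\]
i.e. $\beta x\,\tilde y^{\beta-1} + (1-\beta)\tilde y^{\beta} \ge x^{\beta}$. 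This is exactly the weighted AM--GM inequality: with weights $\beta$ and $1-\beta$ summing to $1$, $\beta a + (1-\beta) b \ge a^\beta b^{1-\beta}$ applied to $a = x\tilde y^{\beta-1}$ and $b=\tilde y^{\beta}$ gives $a^\beta b^{1-\beta}= x^\beta \tilde y^{(\beta-1)\beta+\beta(1-\beta)}=x^\beta$, closing the argument; equivalently it is the statement that $t\mapsto t^\beta$ lies below its tangent line, evaluated at $t=x/\tilde y$.

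The verification of Lemma~\ref{lemma:Gproptog} is assumed (it is proved in the excerpt by direct case-checking), and the scalar step is then just algebra plus concavity of $t\mapsto t^\beta$ on $\mathbb{R}_+$. \textbf{The main obstacle} is therefore bookkeeping rather than depth: one must carefully identify $h_k^{\text{H}}$ as precisely the value of the ``$x$'' argument that appears in $g(h_k|\tilde h_k;h_k^{\text{H}})$ via Lemma~\ref{lemma:Gproptog}, handle the $\beta=0$ case (where $\conv d(x|y)=xy^{-1}$, $\conc d'(x|\tilde y)=\tilde y^{-1}$, and the target inequality becomes $x/\tilde y - \log(x/\tilde y) - 1 \ge 0$ type, or rather reduces to $1 - \log t \ge 1/t$... in fact to $t - 1 \ge \log t$ which again is concavity of $\log$), and make sure the strict positivity from conditions (i)--(ii) is invoked so that the prefactor in~\eqref{eq:Gproptog} does not flip the inequality. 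I would also remark at the end that, unlike the MM update, $h_k^{\text{H}}$ does \emph{not} minimize $G_k$, which is why the exponent $\gamma(\beta)\ne 1$ appears there; the heuristic merely achieves~\eqref{eqn:idea}, which by~\eqref{eqn:algo} is enough for monotonicity of $C$.
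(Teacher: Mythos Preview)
Your proposal is correct and follows essentially the same route as the paper: reduce via Lemma~\ref{lemma:Gproptog} and positivity of the prefactor to the scalar inequality $g(x|\tilde y;x)\le g(\tilde y|\tilde y;x)$, compute the difference as $\conv{d}(x|\tilde y)-\conv{d}(x|x)-(x-\tilde y)\conc{d}'(x|\tilde y)$, and conclude by concavity of $t\mapsto t^\beta$ (the paper writes the tangent-line inequality $\theta^\beta\le 1+(\theta-1)\beta$ with $\theta=x/\tilde y$, which is exactly your weighted AM--GM reformulation). Note that the theorem is stated for $\beta\in(0,1)$ only, so your aside about the $\beta=0$ case is unnecessary here (the paper treats $\beta=0$ as the ME equality case separately).
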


\begin{proof}
For all $\beta\in(0,1)$, straightforward calculations yield
\bal
g(\tilde{y}|\tilde{y};x)-g(x|\tilde{y};x)&=\conv{d}(x|\tilde{y})-\conv{d}(x|x)-(x-\tilde{y})\conc{d}'(x|\tilde{y})\\
&=\frac{1}{1-\beta}\,\tilde{y}^{\beta}(1-\beta+\beta\theta-\theta^{\beta})
\eal
where $\theta=x/\tilde{y}$. Since $f(\theta)=\theta^{\beta}$ is a concave function of $\theta$, we have $f(\theta)\leq f(1)+(\theta-1)f'(1)$, which also reads $\theta^{\beta}\leq1+(\theta-1)\beta$. Hence, $g(\tilde{y}|\tilde{y};x)-g(x|\tilde{y};x)\geq0$
for all $x$, $\tilde{y}$. The latter inequality implies $\forall k,\,g(h_k^{\text{H}}|\tilde{h}_k,h_k^{\text{H}})\leq g(\tilde{h}_k|\tilde{h}_k,h_k^{\text{H}})$, so that we have
$G_k(h_k^{\text{H}} | \tih)\leq G_k(\tilde{h}_k | \tih)$ according to \eqref{eq:Gproptog}, which leads to the result by summation over $k$.
\end{proof}

\cite{cao99} show that inequality \eqref{eq:decrmaj} becomes an equality in the case $\beta=0$, so that each heuristic update yields
$G(\hh^{\text{H}} | \tih)= G(\tih | \tih)$. In this particular case, the heuristic algorithm can be called a ``majorization-equalization'' algorithm, a class of algorithms described in next section. For values of $\beta$ outside the range $[0,2]$, inequality \eqref{eq:decrmaj} does not hold anymore.\footnote{\label{foot:reversed}Indeed, we can prove that the reversed inequality holds for all $\beta<0$, while no systematic result is known for $\beta>2$.} Of course, this does not mean that the heuristic updates produce increasing values of $C(\hh)$. On the contrary, numerical simulations tend to indicate that they always produce nonincreasing values of $C(\hh)$, but proving this is still an open issue for $\beta\not\in[0,2]$. Compared to MM updates, heuristic updates produce larger or equal steps for all $\beta$, since it can trivially be shown that
\begin{equation}
\forall k,\quad |h_k^{\text{H}}-\tilde{h}_k| \geq |h_k^\text{MM}-\tilde{h}_k|.
\label{taillepas}
\end{equation}

For $\beta\not\in[1,2]$, numerical simulations indicate that the heuristic algorithm is faster than the MM algorithm (and we recall that the two algorithms coincide for $\beta\in[1,2]$). Given \eqref{taillepas}, skipping from the latter to the former has an effect comparable to that of overrelaxation: on the average, stretching the steps allows to reduce their number to reach convergence. This will be discussed in more details in Section~\ref{sec:over}.\\

In order to produce even larger steps for $\beta\in[0,2]$, and yet nonincreasing values of $C(\hh)$, the following section explores the concept of majorization-equalization.

\subsection{Majorization-Equalization (ME) algorithm} \label{sec:me}

Let us introduce the general notion of ME update by the fact that the new iterate $\hh^{\text{ME}}$ fulfills
\begin{equation}
\label{eq:generalME}
G(\hh^{\text{ME}} | \tih)= G(\tih | \tih).
\end{equation}
Eq.~\eqref{eq:generalME} actually defines a level set rather than a single point. Let us concentrate on the following more constrained and manageable condition, given the separability of $G(\hh|\tih)$:
$$
\forall k,\quad G_k(h_k^{\text{ME}} | \tih)= G_k(\tilde{h}_k | \tih).
$$
Given \eqref{eq:Gproptog}, this amounts to solve the following equation for $y$, for any $\tilde{y},x>0$:
\begin{equation}
\label{eq:MEsurg}
g(y|\tilde{y};x)=g(\tilde{y}|\tilde{y};x).
\end{equation}
Since $g(y|\tilde{y};x)$ is strictly convex w.r.t $y$, \eqref{eq:MEsurg} has not more than two solutions, one of them being $\tilde{y}$. {By construction, the selection of the other solution (provided that it exists) will provide ME steps that are larger than MM updates, i.e.,}
\begin{equation}
\forall k,\quad |h_k^{\text{ME}}-\tilde{h}_k| \geq |h_k^\text{MM}-\tilde{h}_k|,
\label{taillepasME}
\end{equation}
{as illustrated by Figure~\ref{fig:ME}. To go further on the determination of this solution,} a case-by-case analysis must be performed, depending on the range of $\beta$. 

\paragraph{Case 1: $\beta\in[0,1)$} In that case we have
\begin{equation}
g(y|\tilde{y};x)=\frac{1}{1-\beta} x \,y^{\beta -1}+y\tilde{y}^{\beta-1}+\cst.
\label{eq:gbetaleun}
\end{equation}
Let us remark that
\begin{equation}
\forall\,\tilde{y}, x>0,\quad\lim_{y\rightarrow0} g(y|\tilde{y};x)=\lim_{y\rightarrow\infty} g(y|\tilde{y};x)=\infty,
\label{eq:limites}
\end{equation}
so that \eqref{eq:MEsurg} always admits two positive solutions (or one double positive solution if $\tilde{y}=x$), one of the two being $y=\tilde{y}$. The other one is the solution of interest. However, it is not closed-form, except for specific values of $\beta$ (see Table~\ref{tab:betapoly}). More precisely, when $\beta=1-1/d$ and $d$ is an integer, the solution can be found by solving the following polynomial equation of degree $d$, for $z=y^{1/d}$:
\begin{equation}
(1-\beta)\,\sum_{\ell=1}^d\tilde{z}^{d-\ell}z^\ell-x=0
\label{eq:poly}
\end{equation}
where $\tilde{z}=\tilde{y}^{1/d}$. Not surprisingly, the simplest case $\beta=0$ ($d=1$) leads us to $y=x$, and thus to $h_k^{\text{ME}}=h_k^{\text{H}}$. The case $\beta=0.5$ ($d=2$) is more interesting. The extraction of the positive root of \eqref{eq:poly} then provides the following update formula:
\bal \label{eqn:ME05}
h_k^{\text{ME}}=\frac{\tilde{h}_k}4\left(\sqrt{1+8\frac{h_k^{\text{H}}}{\tilde{h}_k}}-1\right)^2.
\eal
Let us remark that this expression does not correspond to a multiplicative update, although it ensures that positivity is maintained.\\
\begin{table}[t]
\begin{center}
$\begin{array}{|c|c|c|c||c|}
\hline
~~~\beta \leq0~~~&  0\leq\beta\leq1 & 1 \le \beta \le 2 & ~~\beta\geq2~~ &~~d~~\\\hline
 0 & 0 & 2& 2&1\\\hline
-1&1/2&3/2&3 &2\\\hline
-2&2/3&4/3&4 &3\\\hline
-3&3/4&5/4&5 &4\\\hline
\end{array}
$\end{center}
\caption{Values of $\beta$ for which ME updates are closed-form, by root extraction of polynomials of degree $d$.}
\label{tab:betapoly}
\end{table}

\paragraph{Case 2: $\beta\in(1,2]$} In that case we have
\bal
g(y|\tilde{y};x)=\frac{1}{\beta}\,y^{\beta}-\frac1{\beta-1} xy^{\beta-1}+\cst.
\eal
$g(y|\tilde{y};x)$ tends toward $\infty$ for $y\rightarrow\infty$, but it remains finite for $y\rightarrow0$. As a consequence, Eq.~\eqref{eq:MEsurg} only admits the trivial solution $y=\tilde{y}$ if $g(\tilde{y}|\tilde{y};x)>g(0|\tilde{y};x)$, and also the unwanted solution $0$ if $g(\tilde{y}|\tilde{y};x)=g(0|\tilde{y};x)$. It is only when $g(\tilde{y}|\tilde{y};x)<g(0|\tilde{y};x)$ that a positive, non trivial solution exists. This solution is closed-form for specific values of $\beta$ given in Table~\ref{tab:betapoly}. They correspond to $\beta=1+1/d$, where $d$ is an integer. Eq.~\eqref{eq:MEsurg} then amounts to solve the following polynomial equation of degree $d$, for $z=y^{1/d}$:
\bal
\sum_{\ell=0}^d\tilde{z}^{d-\ell}z^\ell-(d+1)\,x=0,
\eal
with $\tilde{z}=\tilde{y}^{1/d}$. The simplest case is $\beta=2$ ($d=1$), and the solution is then given by $y=2x-\tilde{y}$ if $\tilde{y}<2x$, which yields the overrelaxed update
\bal \label{eqn:ME2}
h_k^{\text{ME}}=2h_k^{\text{MM}}-\tilde{h}_k,
\eal
provided that $\tilde{h}_k<2h_k^{\text{MM}}$. Note that this result more simply stems from the fact that when $\beta=2$ the auxiliary function is parabolic and thus symmetric with respect to $h_k^{\text{MM}}$. In the case $\beta=1.5$ ($d=2$), a positive ME update exists if $\tilde{h}_k<3h_k^{\text{MM}}$, and it takes the following form:
\bal \label{eqn:ME15}
h_k^{\text{ME}}=\frac{\tilde{h}_k}4\left(\sqrt{12\frac{h_k^{\text{MM}}}{\tilde{h}_k}-3}-1\right)^2.
\eal
As we need an update strategy that is defined everywhere, we propose to rely on a linear mixture between the MM update and a prolonged version of ME, defined as
\bal
h_k^\theta=\theta h_k^{\text{pME}}+(1-\theta)h_k^{\text{MM}}
\label{eqn:pME}
\eal
where $\theta\in(0,1)$ and $h_k^{\text{pME}}$ prolongs the ME update by zero when the latter does not exist:
\bal
h_k^{\text{pME}}=
\begin{cases}
h_k^{\text{ME}}&\text{if }h_k^{\text{ME}}\text{ is defined}\\
0&\text{otherwise}
\end{cases}
\eal
It is mathematically easy to check that $h_k^\theta$ fulfills Eq.~\eqref{eqn:idea} for all $\theta\in[0,1]$, and that positivity is maintained for all $\theta\in[0,1)$. In practice, values of $\theta$ near one may be favored to produce larger steps. \\

\begin{figure}[t]
\begin{center}
\begin{tabular}{ccc}
(a) $\beta=0.5$ & (b) $\beta=1.5$ & (c) $\beta=2$ \\
\epsfig{figure=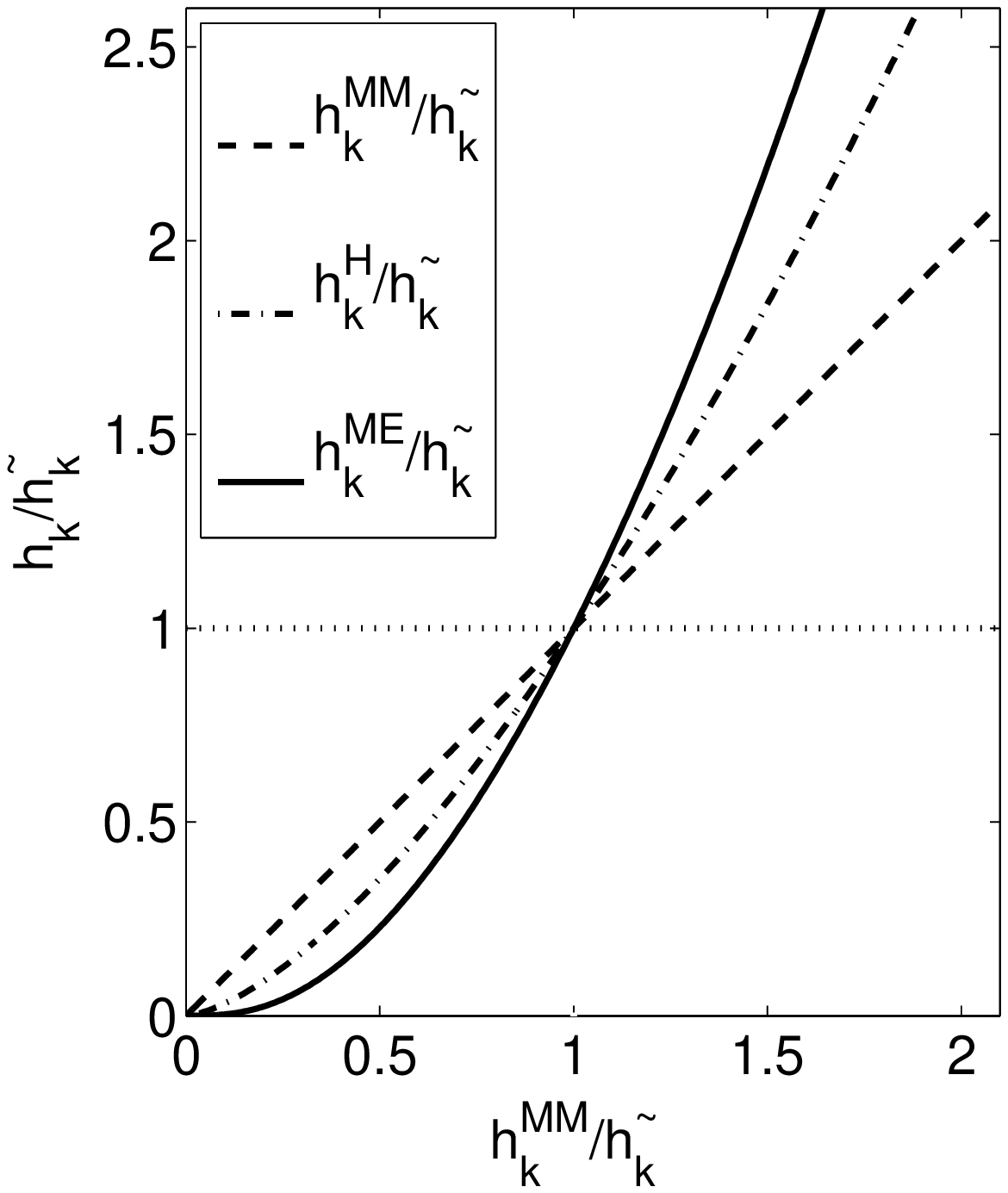,width=.3\linewidth} & \epsfig{figure=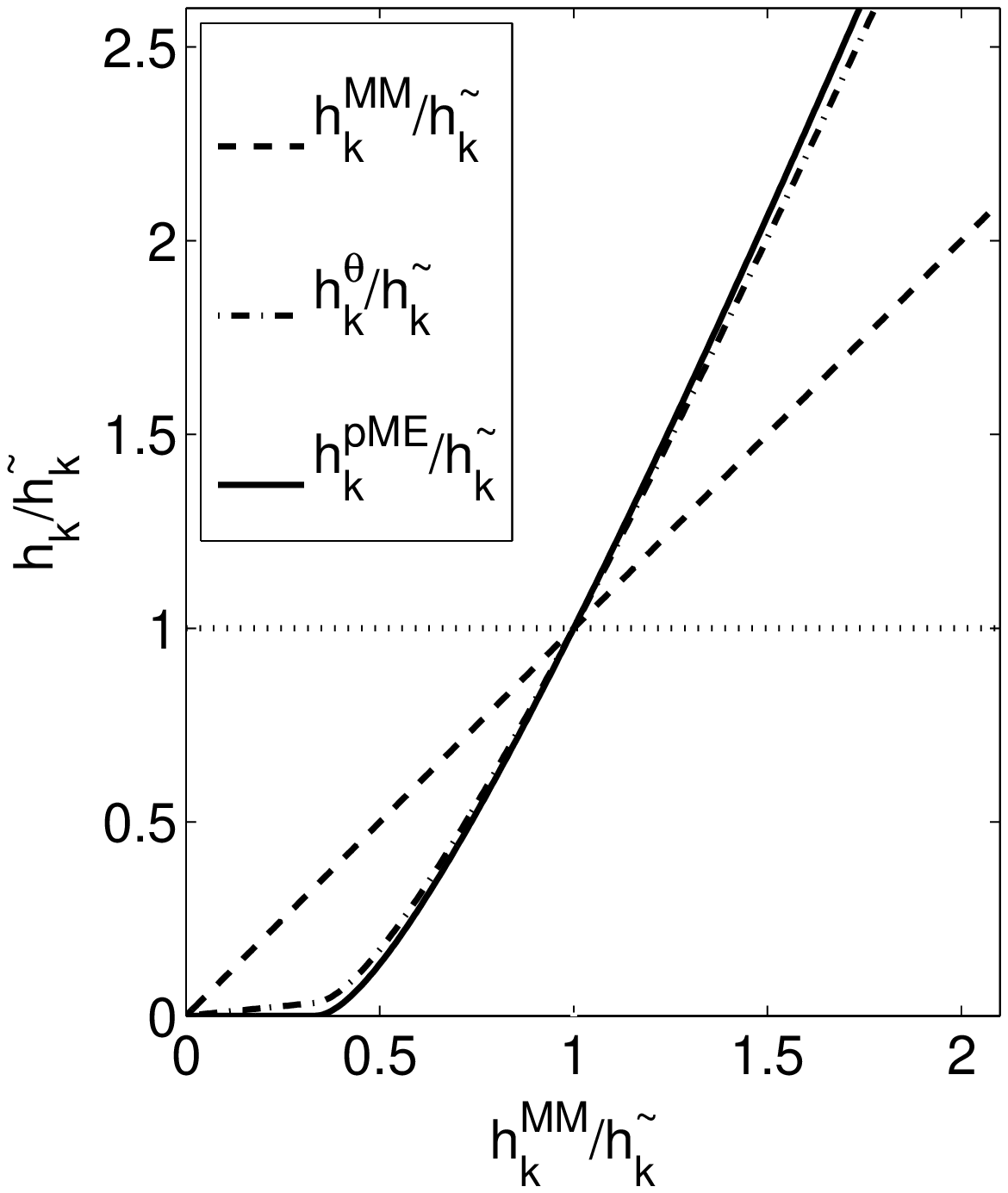,width=.3\linewidth} & \epsfig{figure=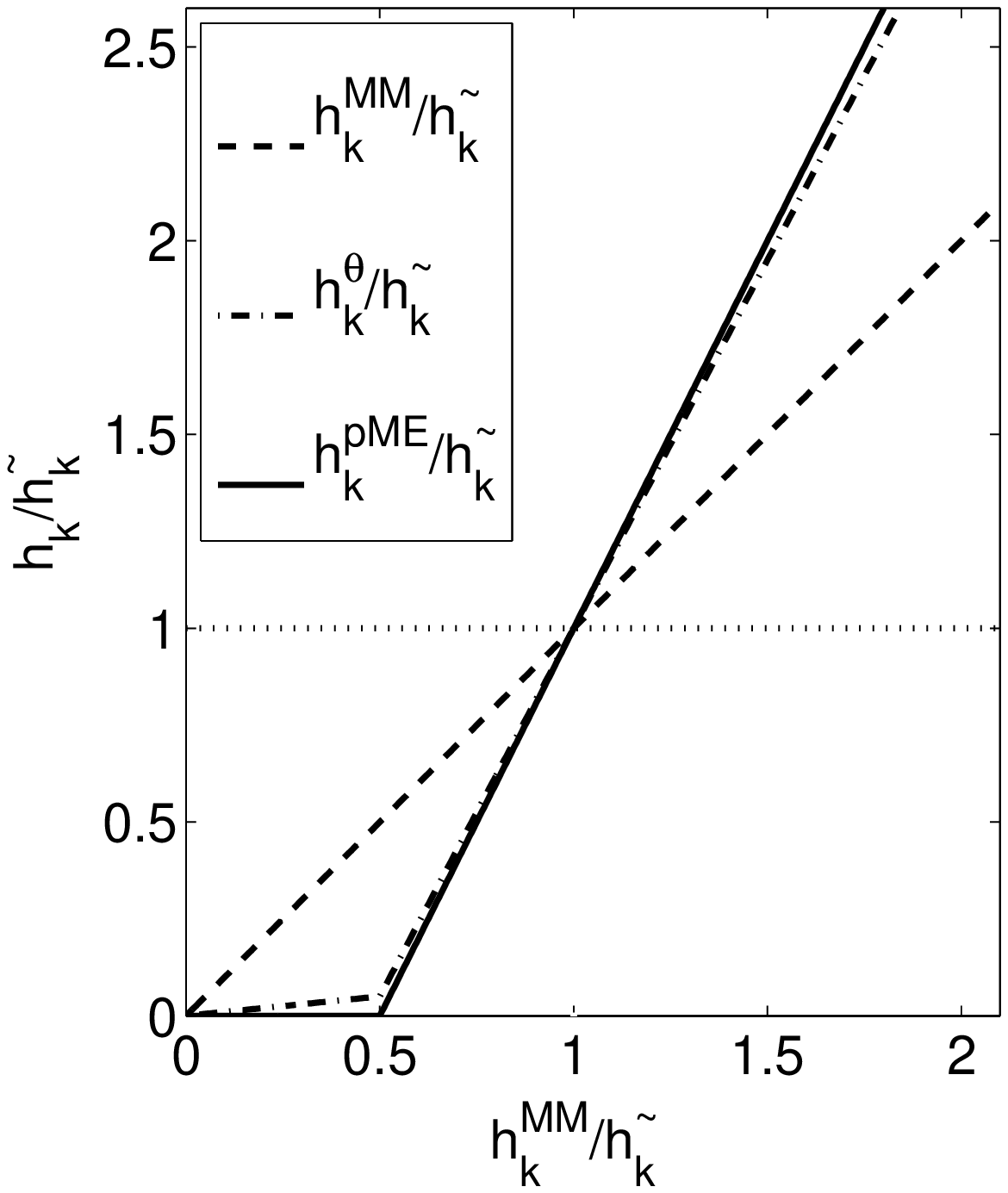,width=.3\linewidth}
\end{tabular}
\end{center}
\caption{Normalized updates $h_k/\tilde{h}_k$ as functions of $h_k^{\text{MM}}/\tilde{h}_k$ ($\theta = 0.9$). The region between the dotted, horizontal line and the solid line correspond to the steps that fulfill Eq.~\eqref{eqn:idea}. The larger departure from the horizontal line, the larger the step. }
\label{fig:steplengths}
\end{figure}

When $\beta<0$ or $\beta>2$, similar analyses can be conducted. In particular, there are specific values of $\beta$ for which a closed-form expression of ME updates is available according to Table~\ref{tab:betapoly}.\\

When $\beta<0$, ME updates always exist since \eqref{eq:gbetaleun} and \eqref{eq:limites} still hold. Moreover, they provide nonincreasing values of $C(\hh)$, while the latter monotonicity property is not yet proved for the heuristic algorithm. However, simulations tend to indicate that the heuristic algorithm is faster than the ME algorithm (which is itself faster than the MM algorithm) in the case $\beta<0$. This is in conformity with the fact that ME steps can then be proved to be smaller than heuristic steps {(on the basis of the reversed inequality mentioned in Footnote~\ref{foot:reversed}).}\\

When $\beta>2$, ME updates do not necessarily exist, akin to the case $\beta\in(1,2]$. When they exist, they provide nonincreasing values of $C(\hh)$, while the latter is not yet proved for the heuristic formula. However, since this range of $\beta$ values is not of utter practical interest, we will not go further into a detailed analysis here.

\subsection{Overrelaxation properties of the heuristic and ME updates} \label{sec:over}

As already stated, the heuristic and ME updates produce larger steps than the MM update, i.e., $|h_k^{\text{H}}-\tilde{h}_k|$ and $|h_k^{\text{ME}}-\tilde{h}_k|$ are larger than $|h_k^{\text{MM}}-\tilde{h}_k|$, for all values of $\beta \in \mathbb{R}$. This is a form of overrelaxation, which will be shown in Section~\ref{sec:simus} to produce faster convergence in practice. The normalized ME (or pME) and heuristic updates studied in the above sections can be written as a function of $h_k^{\text{MM}} / \tilde{h}_k$, such that
\bal
\frac{{h}_k}{ \tilde{h}_k} = f\left(\frac{h_k^{\text{MM}}}{ \tilde{h}_k}\right)
\eal
where ${h}_k$ is either ${h}_k^{\text{H}}$, ${h}_k^{\text{ME}}$, ${h}_k^{\text{pME}}$ or ${h}_k^{\theta}$. For the heuristic update, the function is simply given by $f(x) = x^{1/\gamma(\beta)}$. For $\beta \in \{0.5, 1.5, 2\}$, the function $f$ corresponding to the ME/pME update is easily derived from equations~\eqref{eqn:ME05},~\eqref{eqn:ME15} and~\eqref{eqn:ME2}. Figure~\ref{fig:steplengths} displays the latter functions for the updates studied in this work when $\beta \in \{ 0.5, 1.5, 2 \}$. Overrelaxation appears from the fact that  ${h}_k < {h}^{\text{MM}}_k$ whenever ${h}^{\text{MM}}_k < \tilde{h}_k$ (steps towards left) and ${h}_k > {h}^{\text{MM}}_k$ whenever ${h}^{\text{MM}}_k > \tilde{h}_k$ (steps towards right).

General results about overrelaxation of MM algorithms are given by \cite{sala03}, and in particular in the case of NMF. The authors consider the specific case of KL divergence but their study holds for any divergence. They show that, \emph{in a neighborhood of a stationary point}, {for any $\eta \in (0,2)$}, relaxed updates $h_k^{\text{R}}$ of the form
\bal
\frac{h_k^{\text{R}}}{\tilde{h}_k} = \left( \frac{h_k^{\text{MM}}}{\tilde{h}_k} \right)^\eta
\eal
will converge to the same point than $h_k^{\text{MM}}$, with a different, possibly better, rate of convergence. In particular, the optimal learning rate $\eta$, providing the largest rate of convergence, can be computed from the eigenvalues of the Jacobian, at convergence, of the mapping that relates $h_k^{\text{MM}}$ at iteration $(i)$ to $h_k^{\text{MM}}$ at iteration $(i+1)$. The optimal learning rate is shown to be always greater or equal to 1. A similar result was recently obtained by \cite{bad10}. However, these results do not translate into a practical algorithm, because the latter relaxation property only holds locally, and the computation of the optimal learning rate
requires the stationary point to be known. As such, \cite{sala03} propose an adaptive scheme which incrementally proposes values of $\eta$ greater than one at each iteration, and backtrack to $\eta =1$ when the criterion ceases decreasing.

Our results show that for $\beta \in (0,1)$ the learning rate $\eta = 1/\gamma(\beta) = 2 - \beta $, corresponding to the heuristic update, ensures descent of the criterion everywhere. The results of \cite{sala03} indicate that the learning rate can be increased to  $\eta =2$ when the algorithm approaches the solution. Note that in the neighborhood of the solution the Taylor approximation $f(x) \approx f(1) + f'(1) (x-1)$ applied to $f(x) = x^\eta$ implies that
\bal
({h_k^{\text{R}}} - {\tilde{h}_k}) \approx \eta ( h_k^{\text{MM}} - \tilde{h}_k ).
\eal
A similar approximation carried out with the ME/pME updates defined by equations~\eqref{eqn:ME05},~\eqref{eqn:ME15} for $\beta \in \{0.5, 1.5\}$ reveals that in these two cases $f'(1) = 2$ (and by construction $f(1)=1$), so that, in a neighborhood of the solution we have
\bal
( h_k^{\text{ME}} -  \tilde{h}_k) \approx 2 ( h_k^{\text{MM}} - \tilde{h}_k ).
\eal
This means that the ME algorithms produce the largest admissible learning rate $\eta =2$ in the neighborhood of the solution, while avoiding to adapt the learning rate so as to ensure monotonicity of the criterion. This results holds everywhere for $\beta =2$, see \eqref{eqn:ME2}, by symmetry of the auxiliary function w.r.t to $h_k^{\text{MM}}$. The interested reader may also refer to \citep{lan01,eusipco10} for relaxation of multiplicative algorithms using adaptative learning rates computed through line search.

\subsection{Implementation and complexity of the algorithms}

As seen in previous section, the update rules of all the studied algorithms can be expressed as functions of the ratio $\nabla_{h_k}^- C(\tih) / \nabla_{h_k}^+ C(\tih) $, which dominates the algorithmic complexities. Fortunately, the latter ratio takes a simple matrix form that leads to efficient implementations. As such, getting back to the original factorization problem, the heuristic update~\eqref{eq:heur_update} for factors $\H$ and $\W$ can conveniently be expressed in the following matrix form
\bal
\H \leftarrow& \H . \frac{ \W^T \, [ (\W \H)^{.(\beta-2)} . \V ] }{\W^T \, [\W \H]^{.(\beta-1)}} \label{eqn:Hbeta} \\
\W \leftarrow& \W . \frac{  [ (\W \H)^{.(\beta-2)} . \V ] \, \H^T }{ [\W \H]^{.(\beta-1)} \, \H^T } \label{eqn:Wbeta}
\eal
where the division ${\cdot}/{\cdot}$ is here taken entrywise. The MM update simply involves bringing the corrective ratio to the power $\gamma(\beta)$, and the ME update involves applying a function specific to the value of $\beta$. Hence, the algorithms have similar complexity $\mathcal{O}(FKN)$ and their implementation take simple forms. MATLAB implementations of the algorithms discussed in this paper are available online at \url{http://perso.telecom-paristech.fr/~fevotte/Code/code_beta_nmf.zip}

\section{{Simulations}} \label{sec:simus}

In this section we report performance results of $\beta$-NMF algorithms for the specific values $\beta \in \{ 0.5, 1.5, 2\}$. These values are chosen for their practical interest and because a simple ME algorithms exist in their case. As such this section will evidence the performance improvement brought by the ME approach over the MM or heuristic approaches, with similar computational burden. More precisely, the ME algorithm considered in this section is the mixture of prolonged ME and MM, defined by Eq.~\eqref{eqn:pME} and with $\theta = 0.95$, but we will still refer to it as ME for simplicity. The algorithms for all three considered values of $\beta$ are compared on small-sized synthetic data in Section~\ref{sec:synth}. The algorithms for $\beta = 0.5$ are analyzed in Section~\ref{sec:audio} on the basis of a small music transcription example as this specific value of $\beta$ has proven efficient for this task \citep{fitz09,vinc10,henn10}.

In the following results we will display the cost values through iterations as well as, following \cite{gonz05}, ``KKT residuals''. The residuals allow to monitor convergence to a stationary point and are here defined as
\bal
\mathrm{KKT}(\W) &= \| \min \left\{ \W,  [ (\W \H)^{.(\beta-2)} . (\W \H - \V) ] \H^T  \right\}  \|_1 / FK \\
\mathrm{KKT}(\H) &= \| \min \left\{ \H, \W^T  [ (\W \H)^{.(\beta-2)} . (\W \H - \V) ] \right\} \|_1 / KN.
\eal
They are meant to converge to zero, by Eq.~\eqref{eqn:kkt}. Again, the monotonicity of the heuristic, MM and ME algorithms does not imply convergence of the iterates to a stationary point. Hence, displaying the KKT residuals allows to experimentally check whether convergence is achieved in practice. 

One iteration of each algorithm consists of updating $\W$ given $\H^{(i-1)}$ and $\H$ given $\W^{(i)}$, and then normalize $\W^{(i)}$ and $\H^{(i)}$ to eliminate trivial scale indeterminacies that leave the cost function unchanged. The normalization step consists of rescaling each column of $\W$ so that $\| \ve{w}_k \|_1 = 1$ and rescale the $k^{th}$ row of $\H$ accordingly. The normalization step is not required \textit{per se} but is useful to display and compare the KKT residuals, which are scale-sensitive.

\subsection{Factorization of synthetic data} \label{sec:synth}

We consider a synthetic data matrix $\ve{V}$ constructed as $\ve{V} = \ve{W}^* \ve{H}^*$ where the ground truth factors are generated as the absolute values of Gaussian noise.\footnote{E.g., in MATLAB notation \texttt{V = abs(randn(F,K))*abs(randn(K,N))}. } The matrix can be exactly factorized so that all algorithms should converge to a solution such that $D(\V | \W \H) = 0$. The dimensions are $F=10$, $N=25$, $K=5$. The algorithms (heuristic, MM, ME for $\beta = 0.5$, MM and ME for $\beta \in \{1.5, 2\}$) are run for $10^5$ iterations and initialized with positive random values. Fig.~\ref{fig:synth1},~\ref{fig:synth2} and~\ref{fig:synth3} display for each of the 3 values of $\beta$ the normalized cost values $D(\V| \W \H)/FN$, the KKT residuals, as well as ``fit residuals" computed as $\| \W^{(i)} - \hat{\W} \|_F / FK$ and $\| \H^{(i)} - \hat{\H} \|_F / KN$, where $\hat{\W}$ and $\hat{\H}$ are the factor estimates at the end of the $10^5$ iterations and $\| . \|_F$ is the Frobenius norm. The fit residuals allow to measure the closeness of the current iterates to their end value.\\

The cost values in all three cases converge to zero as an exact factorization is reached (oscillations appear in the end iterations as machine precision is reached). Convergence is achieved in all three cases, as shown by both the cost values and KKT residuals. We visually inspected the factorizations returned by the algorithms. For each value of  $\beta \in \{ 0.5, 1.5\}$, the different algorithms appeared to converge to the same solution (and solutions obtained for the two values of $\beta$ appeared comparable). This was less clear for $\beta = 2$, where ME appeared to reach out a different solution than MM. Still, in this run ME provides fastest convergence for every considered value of $\beta$. Other runs, obtained from other starting points (obtained randomly), tend to show that when the compared algorithms converge to the same solution, then ME converges faster. Convergence to a common solution can be controlled in the specific case where $\W$ is fixed and $\beta \in \{1.5, 2\}$, because the objective function is then convex w.r.t $\H$. In this scenario ME was found to always converge faster than MM. These simulations are reported in the companion report available online at \url{http://perso.telecom-paristech.fr/~fevotte/Samples/neco11/beta_nmf_supp.pdf} \\

The fit residuals in Fig.~\ref{fig:synth1},~\ref{fig:synth2} and~\ref{fig:synth3} show that full convergence will not need be attained to obtain satisfying solutions for most applications as the fit residual will be considered sufficiently small after a few hundred iterations. Note that the factor iterates do not necessarily converge to the ground truth values $\W^*$ and $\H^*$ (and this is what we observed indeed) because of the identifiability ambiguities inherent to NMF \citep{don04,lau08}.\\

Using a MATLAB implementation run on Mac 2.6 GHz with 2 GB RAM, the CPU time required by each algorithm for the $10^5$ iterations is about 60~s for $\beta \in \{0.5, 1.5\}$ and 20~s for $\beta = 2$, including the computation of the cost values and KKT residuals. The ME algorithm is marginally more expensive than MM, itself only slightly more expensive than the heuristic algorithm, for $\beta = 0.5$. The CPU times incurred by the algorithms when $\beta = 2$ is considerably lower thanks to simplifications in Eq.~\eqref{eqn:Hbeta} and~\eqref{eqn:Wbeta}. Indeed, in latter case the term $(\W \H) \H^T$ appearing at the denominator can more efficiently be computed as $\W (\H \H^T)$, which involves a multiplication of matrices with smaller sizes.

\begin{figure}[t]
 \begin{center}
   \includegraphics[width=0.85\linewidth]{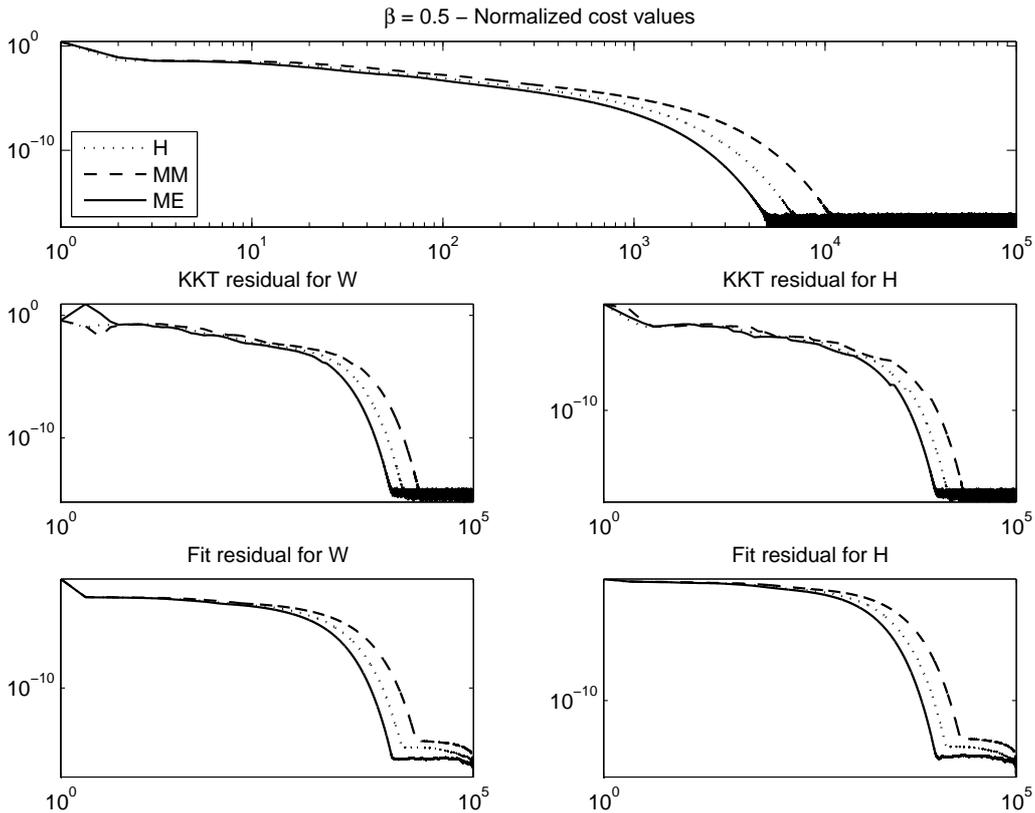}\\   
     \caption{One run of the heuristic (H), ME and MM algorithms on synthetic data with $\beta = 0.5$. Logarithmic scales for both x- and y- axes.}
   \label{fig:synth1}
   \end{center}
\end{figure}

\begin{figure}[t]
 \begin{center}
   \includegraphics[width=0.85\linewidth]{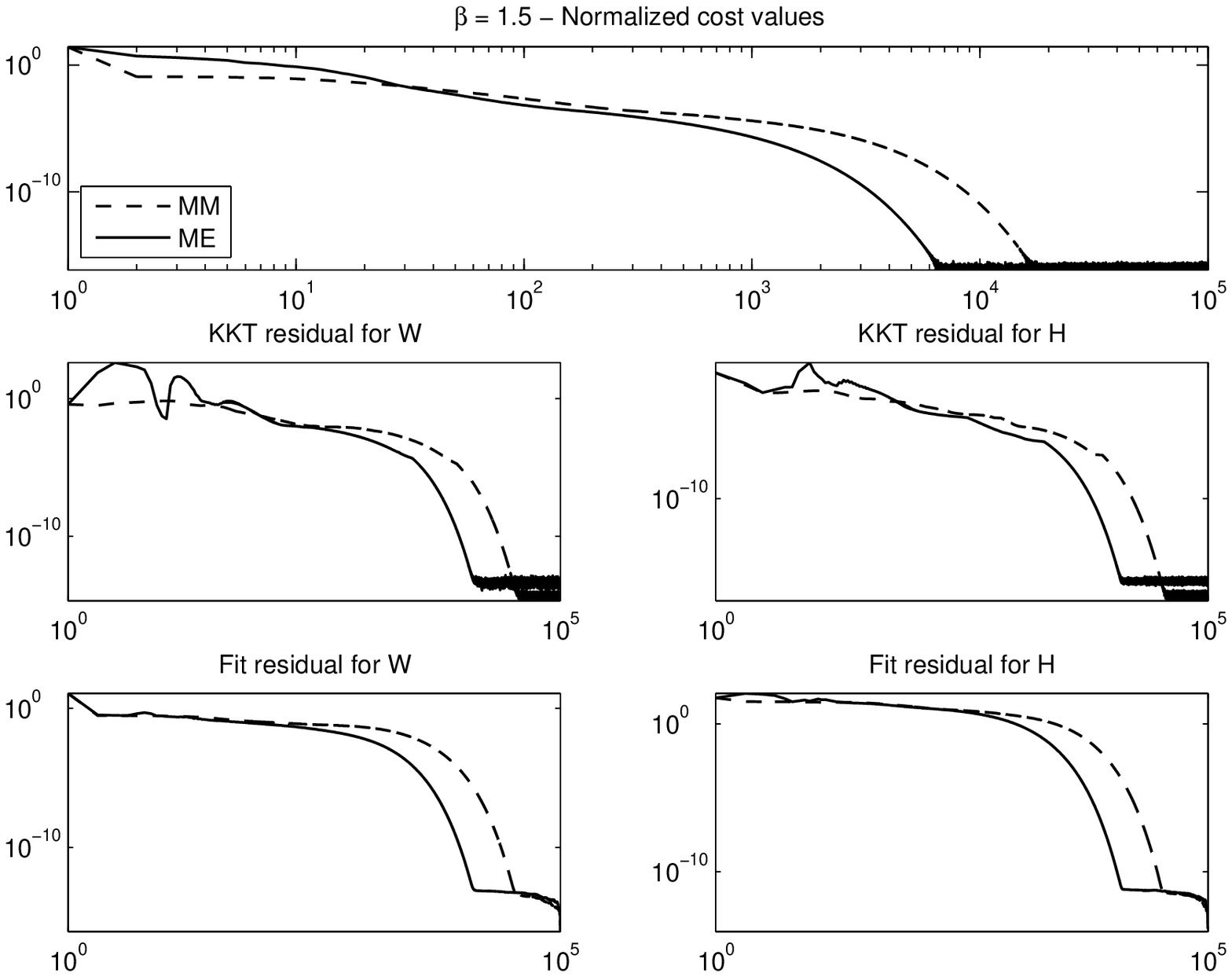}\\   
     \caption{One run of the ME and MM algorithms on synthetic data with $\beta = 1.5$. Logarithmic scales for both x- and y- axes.}
   \label{fig:synth2}
   \end{center}
\end{figure}

\begin{figure}[t]
 \begin{center}
   \includegraphics[width=0.85\linewidth]{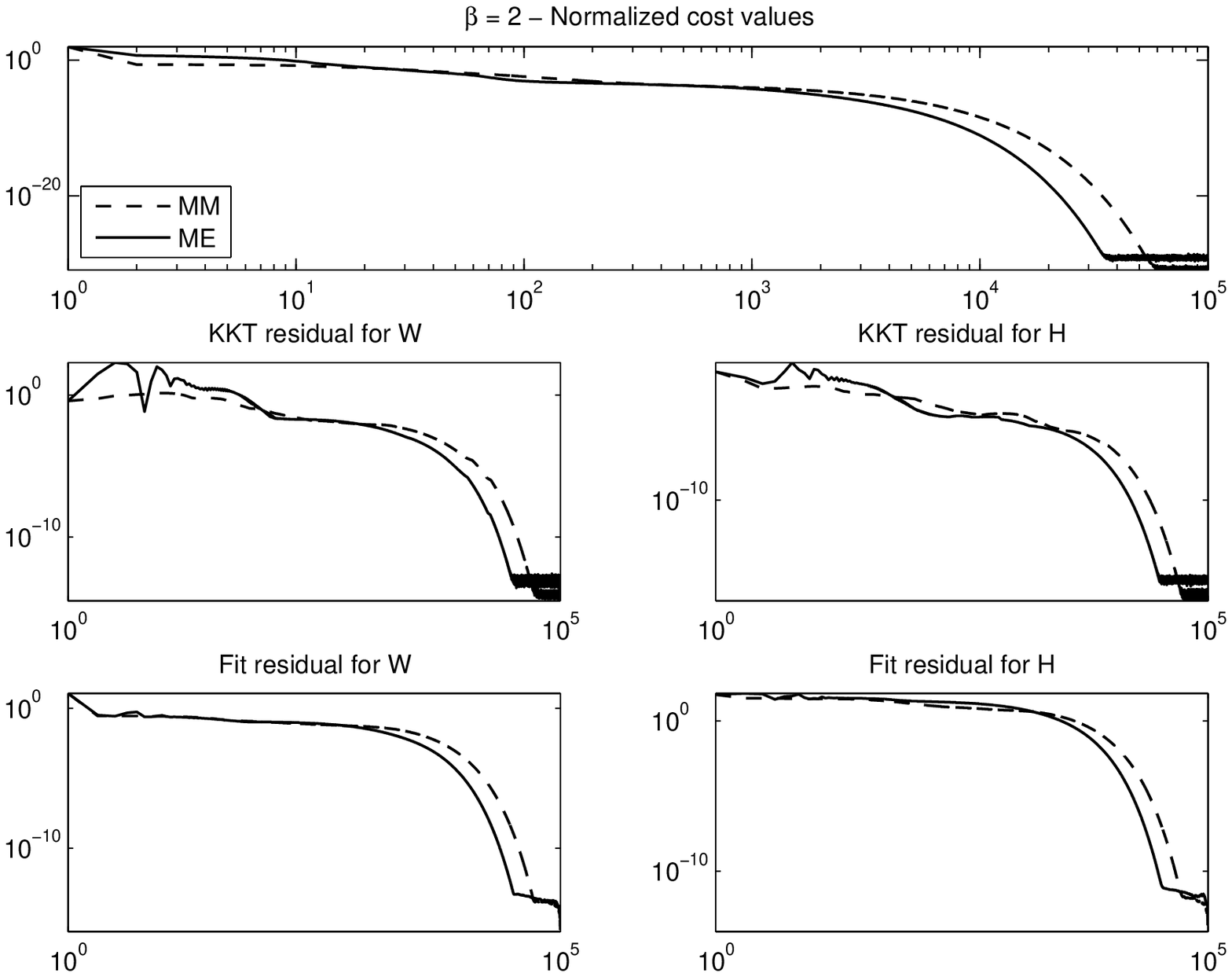}\\   
     \caption{One run of the ME and MM algorithms on synthetic data with $\beta = 2$. Logarithmic scales for both x- and y- axes.}
   \label{fig:synth3}
   \end{center}
\end{figure}

\subsection{Audio spectrogram decompostion} \label{sec:audio}

This section addresses the comparison of the heuristic, MM and ME algorithms for $\beta = 0.5$ applied to an audio spectrogram. We consider the short piano sequence of \citep{neco09}, recorded in live conditions, composed of 4 musical notes, played all at once in the first measure and then played by pairs in all possible combinations in the subsequent measures. A magnitude spectrogram of the audio signal is computed, leading to nonnegative matrix data $\V$ of size $F=513$ frequency bins by $N=674$ time frames. The data is represented top-left of Fig.~\ref{fig:audio}.
	
As discussed in \citep{neco09}, $K$ was set to 6 so as to retrieve in $\W$ the individual spectra of each of the 4 notes and supplementary spectra corresponding to transients and residual noise. The three algorithms were initialized with common positive random values and run for $10^5$ iterations. Figure~\ref{fig:audio} displays the cost values and KKT residuals along the $10^5$ iterations. It was manually checked that the algorithms converged to the desired ``ground-truth" solution, i.e., the notes, transients and residual noise spectra are correctly unmingled. The three plots show that the ME provides fastest convergence overall though, judging from the KKT residuals, it appears that convergence is not achieved within the $10^5$ iterations. However, the musical pitch values (computed from $\W$ at every iteration) converge to their ground truth values after only 30, 50 and 580 iterations for ME, heuristic and MM, respectively. Other initializations yielded two types of results. In a minority of cases, either the heuristic and MM update, on one side, or the ME update, on the other side, converged to a local solution. In the large majority of cases the three algorithms converge to the same solution and the results are similar to those of Figure~\ref{fig:audio}: the heuristic algorithm produces largest decreases of the objective function in the early iterations and is then supplanted by ME. In some runs the pitch values converged faster with the heuristic algorithm than with ME, and it was found that MM is generally slower than the other two algorithms. These results suggest a mixed update of the form of Eq.~\eqref{eqn:pME} where the mixture parameter $\theta$ could be made iteration-dependent so as to give more weight to the heuristic update in the early iterations and then to ME.

\begin{figure}[t]
 \begin{center}
   \includegraphics[width=0.85\linewidth]{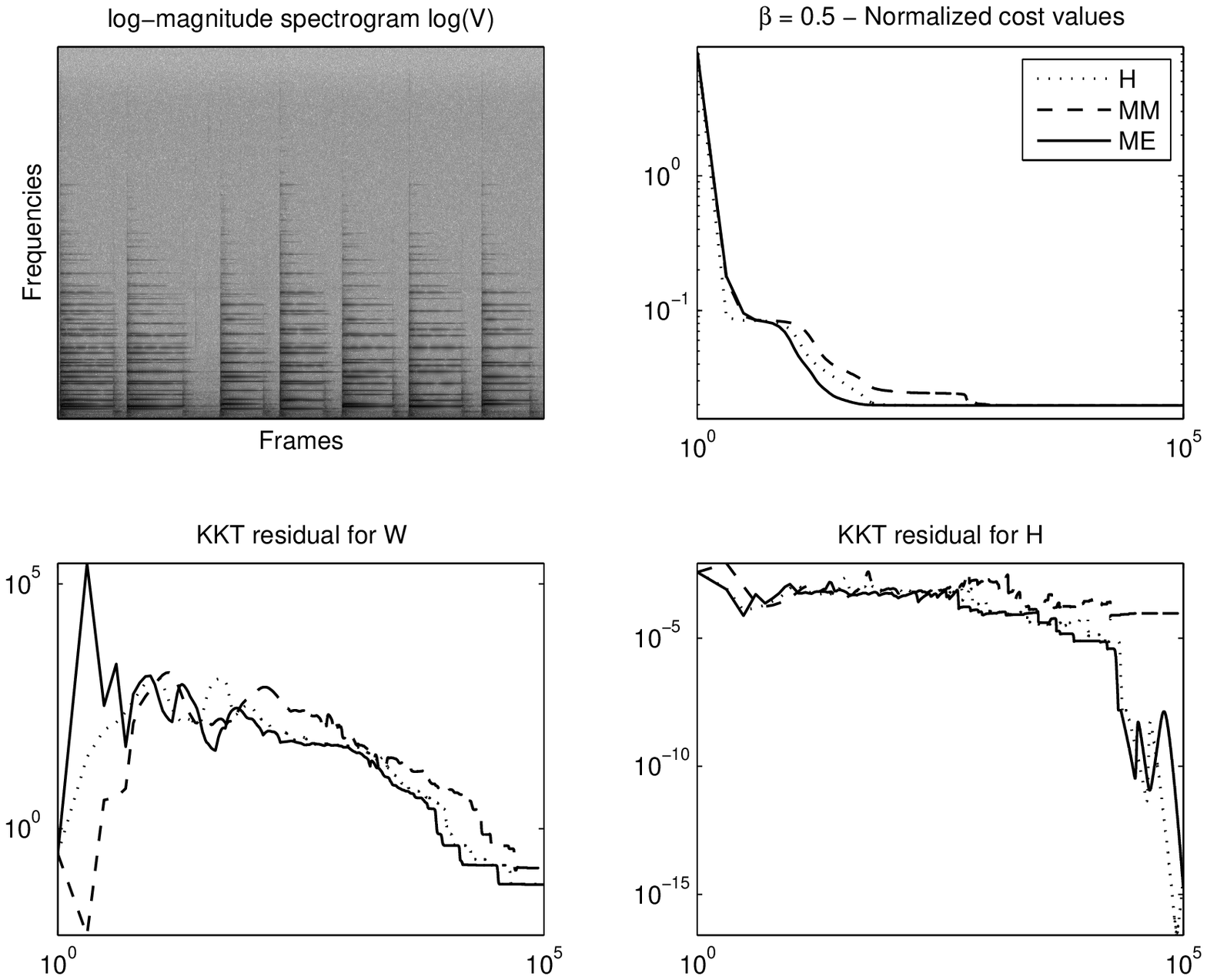}    
   \caption{One run of the heuristic (H), MM and ME algorithms on the piano magnitude spectrogram with $\beta = 0.5$. Logarithmic scales for both x- and y- axes.}
   \label{fig:audio}
   \end{center}
\end{figure}

\subsection{Face data decompostion} \label{sec:face}

Finally, in this section we consider decomposition of face data using $\beta$-NMF. We use the Olivetti dataset, composed of 10 grayscale 8 bits $64 \times 64$ face images of 40 people. We retrieved the data in MATLAB format from \url{http://cs.nyu.edu/~roweis/data.html}. The images are vectorized and form the columns of $\V$, with dimensions $F=4096$ and $N=400$. Fig.~\ref{fig:olivetti} displays the objective functions of one run of the ME and MM algorithms for $\beta \in \{1.5, 2\}$, and illustrate the faster convergence of ME. Other runs led to sensibly similar plots. 

\begin{figure}[t]
 \begin{center}
   \includegraphics[width=0.85\linewidth]{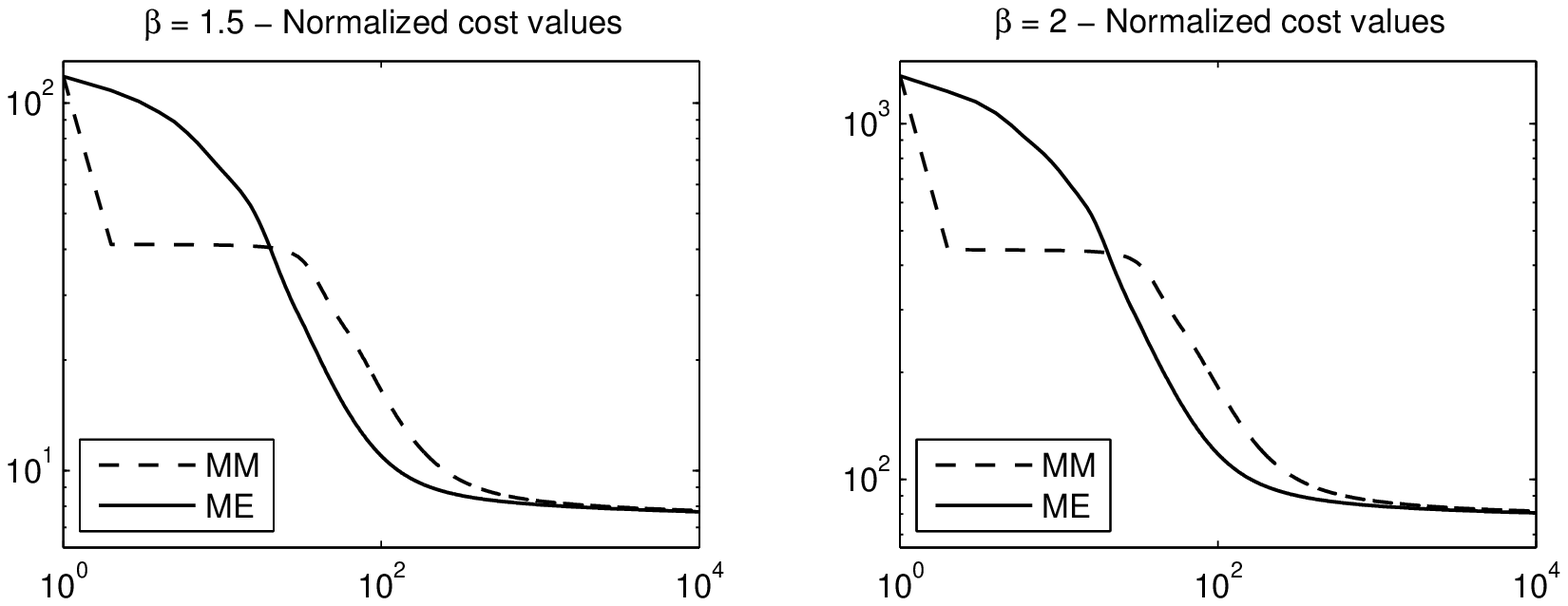}    
   \caption{One run of the MM and ME algorithms on the Olivetti dataset with $\beta = 1.5$ (left) and $\beta = 2$ (right). Logarithmic scales for both x- and y- axes.}
   \label{fig:olivetti}
   \end{center}
\end{figure}

As stated in the introduction, $\beta$-NMF is popular in audio signal processing where the value of $\beta$ can be controlled so as to improve transcription or separation accuracy. The idea of tuning the value of $\beta$ so as to optimize performance applies to any NMF-based method for any type of data. As such, to motivate the use of $\beta$-NMF in a non-audio setting we propose an image interpolation exemple, inspired by \citep{cic08b,cem09}, where we show the influence of $\beta$ on the reconstruction of missing data. We discard 25 \% of the Olivetti data randomly and produce NMF decompositions using the available data for $\beta \in \{-1,0,1,2,3\}$ and $K \in \{50, 100, 200 \}$. Accounting for the missing data requires minor modifications in the algorithms, basically multiplying  $\V$ and its approximate $\W \H$ with a binary mask in which zeroes indicate missing pixels, see \citep{ho08,cic08b,lero08,cem09,sma10} for similar setups. For simplicity we only considered the MM algorithm, as it is consistently defined for all values of $\beta$. It was run from 5 different initializations for every combination $(K, \beta)$, and the factorization yielding lowest end cost value was selected. Fig.~\ref{fig:interp} displays the original image, missing pixels and reconstructions for two of the images in the dataset. We have also computed the Peak Signal to Noise Ratio (PSNR) between original and reconstructed images.\footnote{PSNR is a standard evaluation criterion in image reconstruction, defined as $20 \log_{10} ( F P / \| \ve{v} - \hat{\ve{v}} \|_2 )$, where $\ve{v}$ and $\hat{\ve{v}}$ denote the vectorized original and reconstructed images, and $P$ is the maximum pixel possible value ($P=255$ in our case).} The maximum mean PSNR value (averaged over all 400 images) is obtained for $\beta =2$ and $K = 200$. However, since the PSNR value is equivalent to the Euclidean distance between the original and reconstructed image, it is expected that the optimal value of $\beta$ is biased towards the metric used to assess the quality of reconstruction. Perceptually, we often found the reconstruction obtained with $\beta =3$ to be more satisfying than with $\beta =2$.

\begin{figure}[t]
\begin{center}
 \begin{minipage}{0.45\linewidth}
  \begin{center}
   \includegraphics[width=\linewidth]{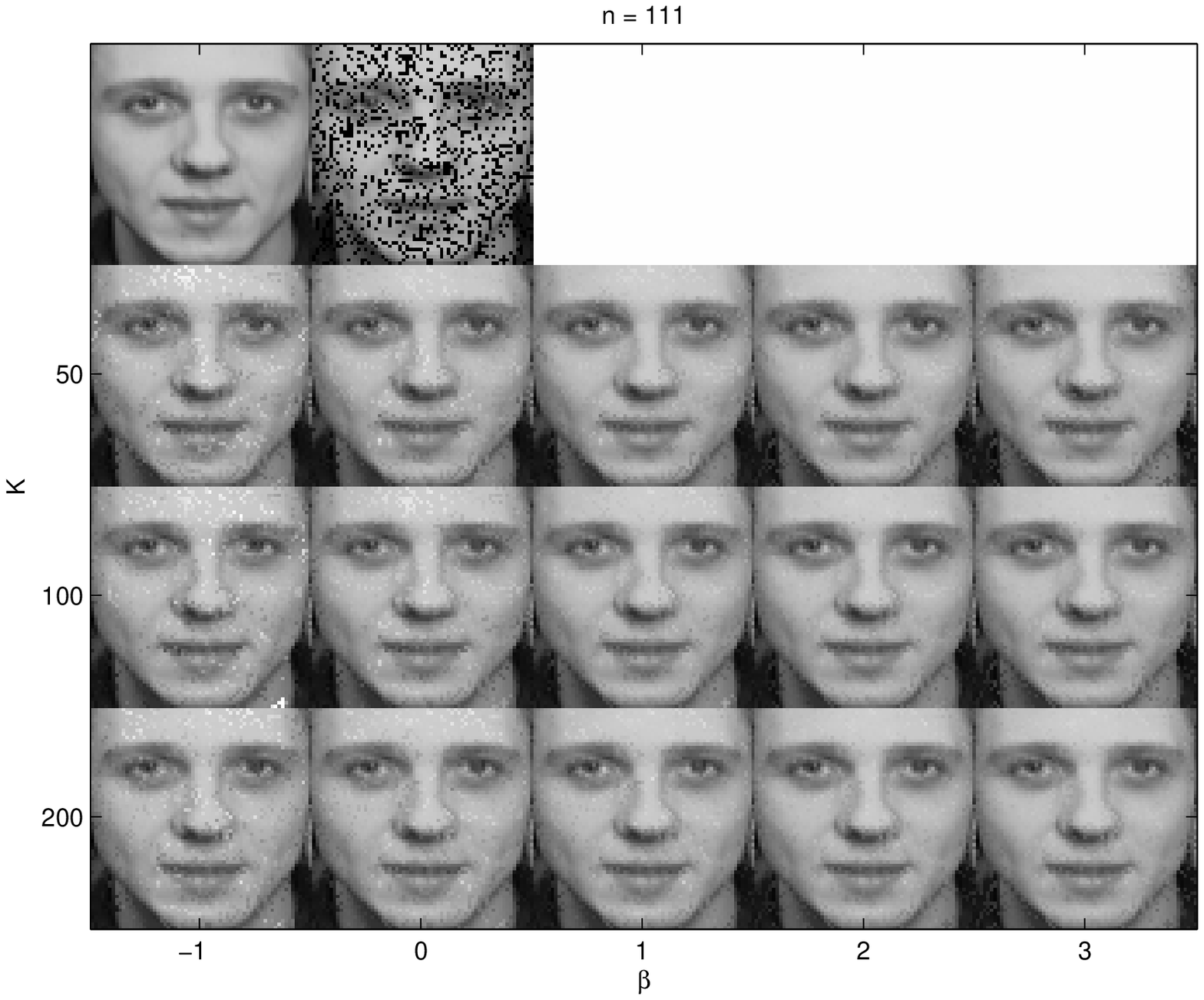} \\
    \medskip
    \hspace{0.3cm}    
   \begin{tabular}{|c|c|c|c|c|c|}
   \hline
   $K$ / $\beta$ & -1 & 0 & 1 & 2 & 3\\
   \hline   
	50 & 28.3  & 30.7 &  32.2  & 31.8 &  31.2 \\
	100 &  26.6 &  30.6 &  32.5  & 33.2 &  32.3 \\
	200 &  29.1 &  31.2  & 32.4 &  32.7  & 32.1 \\
      \hline
   \end{tabular}
   \end{center}
   \end{minipage}
\hspace{0.5cm}
 \begin{minipage}{0.45\linewidth}  
   \begin{center}
    \includegraphics[width=\linewidth]{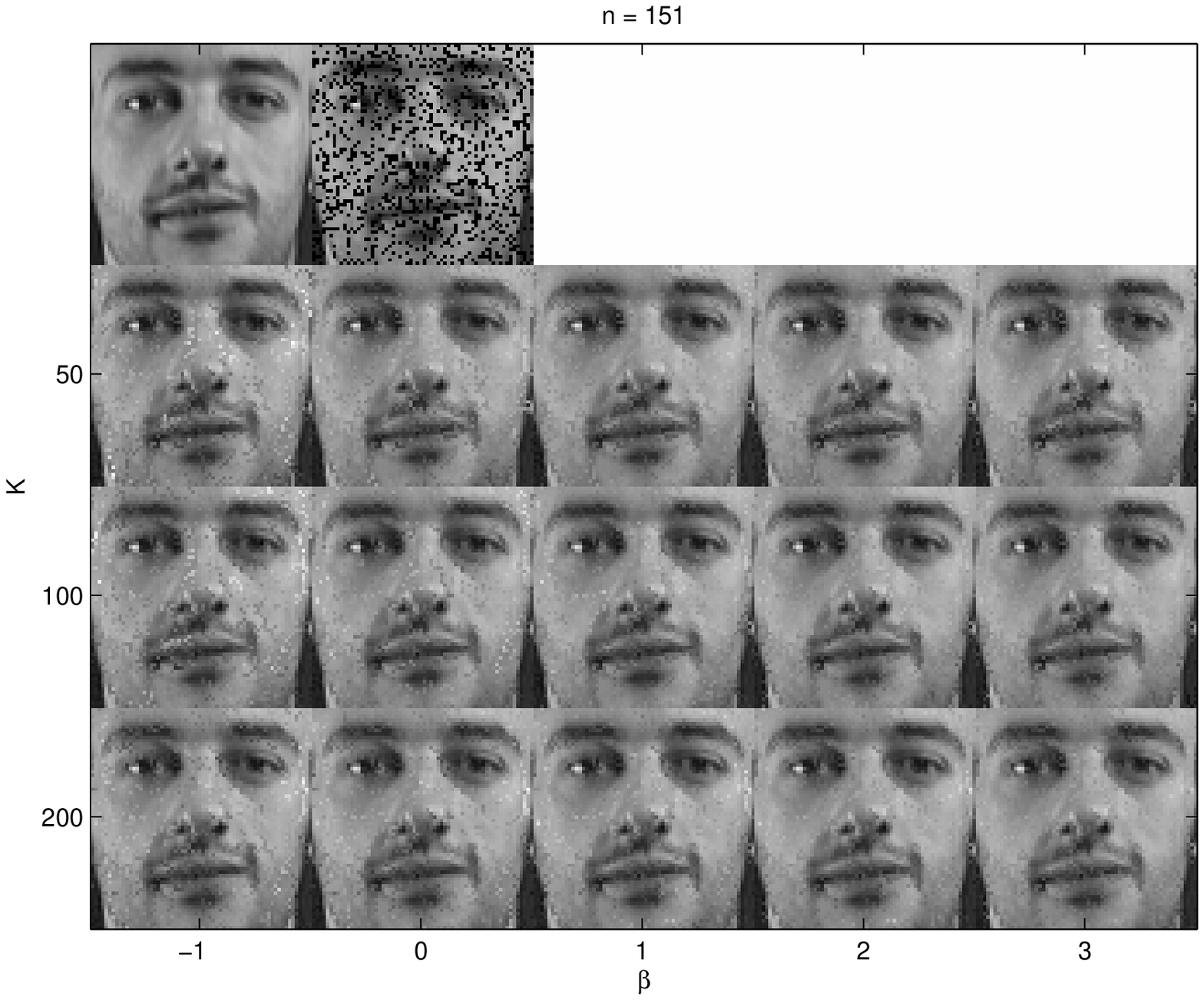} \\
    \medskip
    \hspace{0.3cm}
   \begin{tabular}{|c|c|c|c|c|c|}
   \hline
   $K$ / $\beta$ & -1 & 0 & 1 & 2 & 3\\
   \hline   
    50 & 26.3 & 27.9 & 29.6 & 29.5 & 28.8 \\
    100 & 26.2 & 28.3 & 29.3 & 29.8 & 29.8 \\
    200 & 27.4 & 28.3 & 29.0 & 30.0 & 30.0 \\
      \hline
   \end{tabular}
   \end{center}   
   \end{minipage}
   \end{center}
   \caption{Interpolation results with the Olivetti dataset. Original and corrupted data are shown top left of each plot. Below are the reconstructions obtained for $K \in \{50, 100, 200 \}$ and $\beta \in \{-1,0,1,2,3 \}$. Tables report PSNRs (in dB) of the reconstructions.}
   \label{fig:interp}
\end{figure}

\section{Variants of $\beta$-NMF} \label{sec:variants}

In this section we briefly discuss how some common variants of NMF, penalized NMF and convex-NMF, can be handled under the $\beta$-divergence.

\paragraph{Penalized $\beta$-NMF} 
Supplementary functions of $\W$ and/or $\H$ are often added to the cost function~\eqref{eqn:defcost} so as to induce some sort of regularization of the factor estimates or so as to reflect prior belief, e.g., in Bayesian maximum a posteriori (MAP) estimation. When such penalty terms are separable in the columns of $\H$ or in the rows of $\W$, penalized NMF essentially amounts to solving the following optimization problem:
\begin{equation} \label{eqn:minihpen}
\underset{\hh}{\text{min}} \ C_P(\hh) \defequal D(\vv | \W \hh) +  L(\hh) \ \text{subject to} \ \hh \ge 0
\end{equation}
where $L(\hh)$ is the penalty term. An auxiliary function to $C_P(\hh)$ is readily given by
\bal
G_P(\hh|\tih) \defequal G(\hh|\tih) + L(\hh)
\eal
where $G(\hh|\tih)$ is any auxiliary function to $C(\hh) = D(\vv | \W \hh)$. MM or ME algorithms can then be designed on a case-by-case basis. Let us consider a short example for illustration: $\ell_1$-norm regularization. In that case we have
\bal
L(\hh) = \lambda \sum_k h_k
\eal
where $\lambda$ is a positive weight parameter. Using the auxiliary function designed in Section~\ref{sec:auxfun} and Eq.~\eqref{eqn:gradconvconc}, the gradient of the penalized auxiliary function writes
\balx
\nabla_{h_k} G_L(\hh | \tih) = \sum_f w_{fk} \, \left[ \conv{d}' \left(v_f |  \tilde{v}_f \frac{h_k}{\tilde{h}_k} \right) + \conc{d}'(v_f|\tilde{v}_f) \right] + \lambda. 
\ealx
The MM algorithm for $\ell_1$-regularized $\beta$-NMF takes a very simple form for $\beta \le 1$, such that
\bal \label{eqn:mmsparse}
h_k = \tilde{h}_k \left(  \frac{\sum_f w_{fk} \, v_f \, \tilde{v}_f^{\beta-2} }{\sum_f w_{fk} \, \tilde{v}_f^{\beta-1} + \lambda} \right)^{\gamma(\beta)}.
\eal
This in particular leads to $\ell_1$-regularized NMF algorithms for KL-NMF and IS-NMF with proven monotonicity. An update similar to Eq.~\eqref{eqn:mmsparse} is obtained for $\beta \ge 2$ but the $\lambda$ term appears through its sign opposite at the numerator, instead of appearing at the denominator. Hence the nonnegativity constraint may become active and must be treated carefully; in that case our result coincides with similar findings of \cite{Pauca2006,Morup2007} for the specific case of $\ell_1$-regularized NMF with the Euclidean distance ($\beta = 2$). In the case $\beta \in (1,2)$ the MM algorithm does not come up with a simple closed-form update, which supports the fact in the penalized case handy algorithms may only come on a case-by-case basis. This is similar to Expectation-Maximization (EM) procedures for MAP estimation, in which the E-step is essentially unchanged but where the M-step might become intractable because of the penalty term. {ME algorithms can also be designed for the $\ell_1$-regularized problem and as a matter of fact it can be shown that the results of Table~\ref{tab:betapoly} (i.e., the values of $\beta$ for which a closed-form update exists) still hold in that case. }

\paragraph{Convex $\beta$-NMF} In some recent NMF-related works the dictionary $\W$ is constrained to belong to a known subspace $\ve{S} \in \RR_+^{F \times M}$ such that
\bal
\W = \ve{S} \ve{L}
\eal 
where $\ve{L} \in \RR_+^{M \times K}$. For example \cite{ding10} assume the columns of $\W$ to be linear combinations (with unknown expansion coefficients) of data points (columns of $\V$), so as to enforce the dictionary to be composed of \emph{data centroids}, while \cite{vinc10} assume the dictionary element to be linear combinations of narrow band spectra, so as to enforce harmonicity and smoothness of the dictionary. The term ``convex-NMF'' was introduced by \cite{ding10} to express the idea that $\W$ belongs to the convex set of all nonnegative linear combinations of elements of $\ve{S}$, but this does not make the optimization problem convex in itself, in the general case. \\

{
In this setting, the dictionary update is tantamount to solving
\begin{equation} \label{eqn:miniWconv}
\underset{\ve{L}}{\text{min}} \ C_{cv}(\ve{L}) \defequal D(\V | \ve{S} \ve{L} {\H}) = \sum_{fn} d\left(v_{fn} | \sum_{mk} s_{fm} l_{mk} h_{kn} \right)  \quad \text{subject to} \ \ve{L} \ge 0.
\end{equation}
As a matter of fact, this matricial optimization problem can be turned into vectorial nonnegative linear regression so that the results of Section~\ref{sec:algos} holds. Given some mappings $(f,n) \in \{1,F\} \times \{1,N\} \rightarrow p \in \{1, FN \}$ and $(m,k) \in \{1,M\} \times \{1,K\} \rightarrow q \in \{1, MK \} $ let us introduce the following variables: ${\ve{T}}$ is the matrix of dimension $FN \times MK$ with coefficients $t_{pq} = s_{fm} h_{kn}$, $\underline{\ve{v}}$ is the column vector of size $FN$ with coefficients $\underline{v}_{p} = v_{fn}$, $\underline{\ve{l}}$ is the column vector of size $MK$ with coefficients $\underline{l}_{q} = l_{mk}$. Then we have
\bal
D(\V | \ve{S} \ve{L} {\H}) = \sum_p d \left( \underline{v}_p | \sum_q t_{pq} \, \underline{l}_q \right)
\eal
and thus the estimation of $\ve{L}$ amounts to the approximation $\underline{\ve{v}} \approx \ve{T} \, \underline{\ve{l}}$. As such, any of the algorithms described in Section~\ref{sec:algos} can be employed for this task. As before, the resulting vectorial updates can be turned into matricial updates, leading to simple and efficient implementations. For example, the MM update reads
\bal
\ve{L} \leftarrow \ve{L} . \left( \frac{ \ve{S}^{T} \left[ (\ve{S} \ve{L} \ve{H})^{.(\beta-2)} . \V \right] \H^{T} }{ \ve{S}^{T}  \left[ (\ve{S} \ve{L} \ve{H})^{.(\beta-1)} \right] \H^{T} } \right)^{.\gamma(\beta)}.
\eal
This result proves the monotonicity of some of the algorithms derived heuristically in \citep{vinc10} and also extends the results of \citep{ding10} for convex NMF with the Euclidean distance to the more general $\beta$-divergence.\footnote{More precisely, \cite{ding10} consider a  ``semi''-NMF version where $\ve{S} = \V$ and the data is allowed to be real-valued while the nonnegativity constraint is solely imposed on $\ve{L}$ and $\H$; our results do not apply to this more general framework but only to the special case where $\V$ in nonnegative.}
}

\section{Conclusions} \label{sec:conc}

This paper has addressed NMF with the $\beta$-divergence. The problem may be reduced to a mere nonnegative linear regression problem and our approach is based on the construction of an auxiliary function $G(\hh|\tih)$ which majorizes the objective function $C(\hh)$ everywhere and is tight for $\hh = \tih$. The auxiliary function unifies existing auxiliary functions for the Euclidean distance and the KL divergence \citep{lee01}, for the ``generalized divergence'' of \cite{kom07} (in essence the $\beta$-divergence on its convex part, i.e., $\beta \in [1,2]$) and for the IS divergence \citep{cao99}. Various descent algorithms, free of tuning parameters, may then be derived from this auxiliary function. As such, the findings of this paper may be summarized as follows.

\begin{itemize}
\item The MM algorithm based on the described auxiliary function is shown to yield multiplicative algorithms for $\beta \in \mathbb{R}$, as described by Eq.~\eqref{eqn:upbeta}. For $\beta \in [1,2]$ (interval of values for which the $\beta$-divergence is convex), the MM algorithm coincides with the heuristic algorithm given by Eq.~\eqref{eq:heur_update}, as already known from \cite{kom07}.  
\item In Section~\ref{sec:heur}, we prove the monotonicity of the heuristic algorithm for $\beta \in (0,1)$ by proving the inequality $G(\hh^{\text{H}} | \tih)\leq G(\tih | \tih)$. {Hence, aggregating the existing monotonicity results for $\beta = 0$ and $\beta \in [1,2]$, it can now be claimed that the heuristic algorithm is monotone for $\beta \in [0,2]$, which is the range of values of practical interest that has been considered in the literature.}
\item In Section~\ref{sec:me}, we introduced the concept of maximization-equalization (ME) algorithms. Such algorithms are exhibited for specific values of $\beta$, in particular for $\beta \in \{ 0, 0.5, 1.5, 2 \}$ which are values of practical interest. For $\beta = 0$ (IS divergence) the ME algorithm coincides with the heuristic algorithm, whose monotonicity already holds from \citep{cao99}. For other values of $\beta$ the ME algorithms are nonmultiplicative. For $\beta \in \{ 0.5, 1.5, 2 \}$ they amount to solving polynomial equations of order 1 or 2. The result section has illustrated the faster convergence of the ME approach w.r.t to MM or heuristic, with equivalent complexity.

\item Finally, in Section~\ref{sec:variants} we have considered variants of NMF with the $\beta$-divergence. We have explained how penalty terms may be handled in the auxiliary function setting; in particular we have presented simple multiplicative algorithms for $\ell_{1}$ regularized KL or IS NMF. Then, we have shown how the algorithms constructed for plain NMF holds for convex-NMF, generalizing and proving the monotonicity of existing algorithms.
\end{itemize}

As for perspectives, the present work leaves two important questions unanswered. The first one is the monotonicity of the heuristic algorithm for $\beta \not\in [0,2]$. The monotonicity is observed in practice but we have not been able to come up with proofs in the presented setting. Either other approaches need to be followed or a different type of auxiliary functions than the one presented here needs to be envisaged. As suggested in Section~\ref{sec:defbet}, the convex-concave decomposition of the $\beta$-divergence is not unique and decompositions other than the ``natural'' one employed in this paper may lead to auxiliary functions that more closely fit to the criterion. The second, probably more ambitious question is the convergence of the sequence of iterates produced by the proposed algorithms a stationary point. Partial results exist for Euclidean NMF \citep{lin07a}, convergence of multiplicative rules for nonnegative linear regression (i.e., when only one of the two matrices is updated) has been studied in a few cases, see, e.g., \citep{titt87,depi93,egg98}, but general results for NMF with the $\beta$-divergence are still lacking. A noteworthy attempt has recently been made by \cite{bad10}, which points difficulties in the convergence study due to the inherent scale ambiguity of factorization models.

Finally, another relevant perspective is the design of new types of $\beta$-NMF algorithms. In the Euclidean case, projected gradient methods \citep{lin07b}, second-order active sets methods \citep{kim08}, block-coordinate descent methods \citep{mair10} have recently been shown to outperform standard multiplicative updates, see also \citep{mor09} for a comparison of a selection of algorithms. As such it would be interesting to study how these approaches may extend to the more general $\beta$-NMF framework.

\section*{Acknowledgements} The authors would like to thank Francis Bach, Henri Lantéri, Augustin Lefèvre, Cédric Richard and Céline Theys for inspiring discussions related to optimization in NMF. Many thanks to the reviewers for helpful comments. This work is supported by project ANR-09-JCJC-0073-01 TANGERINE (Theory and applications of nonnegative matrix factorization).


\end{document}